\newcommand{\modelname}{DDGroup\xspace}
\newcommand{\blockcomment}[1]{}
\renewcommand{\l}{\left}
\renewcommand{\r}{\right}
\DeclareMathOperator*{\argmin}{argmin}
\DeclareMathOperator*{\argmax}{argmax}
\DeclareMathOperator*{\var}{\mathbb{V}}
\DeclareMathOperator*{\vol}{vol}
\newcommand{\nn}{\nonumber}
\newcommand{\T}{\top}
\newcommand{\iid}{\stackrel{\tiny{\mathrm{iid}}}{\sim}}
\newcommand{\E}{\mathbb{E}}
\newcommand{\R}{\mathbb{R}}
\renewcommand{\P}{\mathbb{P}}
\newcommand{\I}{\mathds{1}}
\newcommand{\V}{\mathbb{V}}
\newcommand{\cB}{\mathcal{B}}
\newcommand{\cF}{\mathcal{F}}
\newcommand{\cN}{\mathcal{N}}
\newcommand{\cP}{\mathcal{P}}
\newcommand{\cX}{\mathcal{X}}
\newcommand{\cY}{\mathcal{Y}}
\newcommand{\cZ}{\mathcal{Z}}
\newcommand{\tX}{\widetilde{X}}
\newcommand{\tY}{\widetilde{Y}}
\renewcommand{\a}{\alpha}
\renewcommand{\b}{\beta}
\renewcommand{\d}{\delta}
\newcommand{\e}{\varepsilon}
\newcommand{\g}{\gamma}
\newcommand{\s}{\sigma}
\newcommand{\ein}{\e_\mathrm{in}}
\renewcommand{\sin}{\s_\mathrm{in}}
\newcommand{\eout}{\e_\mathrm{out}}
\newcommand{\sout}{\s_\mathrm{out}}
\newcommand{\smax}{\s_\mathrm{max}}
\newcommand{\smin}{\s_{\mathrm{min}}}
\theoremstyle{plain}
\newtheorem{thm}{Theorem}[section]
\newtheorem{lem}[thm]{Lemma}
\newtheorem{cor}[thm]{Corollary}
\theoremstyle{definition}
\theoremstyle{remark}
\icmltitlerunning{Data-Driven Subgroup Identification for Linear Regression}
\begin{document}

\twocolumn[
\icmltitle{Data-Driven Subgroup Identification for Linear Regression}




\begin{icmlauthorlist}
\icmlauthor{Zachary Izzo}{math}
\icmlauthor{Ruishan Liu}{bds}
\icmlauthor{James Zou}{bds}
\end{icmlauthorlist}

\icmlaffiliation{math}{Department of Mathematics, Stanford University, USA}
\icmlaffiliation{bds}{Department of Biomedical Data Science, Stanford University, USA}

\icmlcorrespondingauthor{Zachary Izzo}{zizzo@stanford.edu}

\icmlkeywords{Subgroup Discovery}

\vskip 0.3in
]



\printAffiliationsAndNotice{}  

\begin{abstract}
Medical studies frequently require to extract the relationship between each covariate and the outcome with statistical confidence measures. To do this, simple parametric models are frequently used (e.g. coefficients of linear regression) but usually fitted on the whole dataset. However, it is common that the covariates may not have a uniform effect over the whole population and thus a unified simple model can miss the heterogeneous signal. For example, a linear model may be able to explain a subset of the data but fail on the rest due to the nonlinearity and heterogeneity in the data. In this paper, we propose \modelname (data-driven group discovery), a data-driven method to effectively identify subgroups in the data with a uniform linear relationship between the features and the label. \modelname outputs an interpretable region in which the linear model is expected to hold. It is simple to implement and computationally tractable for use. We show theoretically that, given a large enough sample, \modelname recovers a region where a single linear model with low variance is well-specified (if one exists), and experiments on real-world medical datasets confirm that it can discover regions where a local linear model has improved performance. Our experiments also show that \modelname can uncover subgroups with qualitatively different relationships which are missed by simply applying parametric approaches to the whole dataset.
\end{abstract}

\section{Introduction} \label{sec: intro}
In scientific and medical analyses, simple parameteric models are frequently fit to data to draw qualitative or quantitative conclusions about the relationships between different variables of interest. Typically, a single interpretable model is fit on the entire dataset, implicitly assuming that there are uniform relationships between the covariates and target variable across the whole population. In practice, the data may instead come from a heterogeneous population, where different \emph{subgroups} of the population may obey qualitatively different trends.

For example, suppose we fit a linear model with features including several patient biomarkers, as well as blood concentration of a particular drug, to predict blood pressure. After fitting the model to the whole dataset, we find that there is a statistically signficant negative coefficient on the drug concentration. We may be tempted to conclude that this drug should be administered to a general patient in order to reduce blood pressure. However, there may be a small subgroup in the data (say, patients over the age of 80) for whom the drug actually \emph{increases} blood pressure. In this case, naively fitting a single model to the entire dataset not only reduces our predictive accuracy, it also leads to adverse outcomes for this subgroup of the population.

Modern high-capacity models such as neural networks can help to avoid this problem as they represent a much richer function class. However, these models are often inherently difficult to interpret, making them unsuitable if the primary goal is to draw scientific or clinical conclusions about the data rather than simply having good predictive performance. This motivates our desire to find interpretable regions in the data where interpretable models (such as linear regression) perform well. We call this the \emph{subgroup selection} problem.

\subsection{Our Contributions}
In this work, we consider a flexible formalization of the subgroup selection problem. We propose an general algorithmic framework and a specific instantiation, \modelname (data-driven group discovery), for data-driven subgroup selection. We prove that \modelname has desirable theoretical properties, and results on synthetic and real data show the effectiveness of \modelname in practice.

\subsection{Related Work}
Subgroup identification is an important topic in biostatistics \citep{lipkovich2017subgroup}. Here, the main focus is on identifying subsets of the population with a significant beneficial treatment effect from a new drug or procedure. Common approaches include \emph{global outcome modeling}, in which the user models the patient response with and without treatment separately, then reconstructs the treatment effect from these models; \emph{global treatment modeling}, in which the user models the treatment effect directly; and \emph{local modeling}, where the user tries to identify a region with a strong positive treatment effect. Of these approaches, our method is most closely related to the local modeling approach. However, existing local modeling methods typically use tree-based greedy approaches to region selection which do not come with any guarantees \citep{lipkovich2017subgroup}.

In the knowledge discovery in databases (KDD) community, the problem of subgroup discovery has been studied more extensively; see \cite{atzmueller2015subgroup} and \cite{song2016subgroup} for surveys. In its most general form, subgroup discovery in this community refers to finding regions of the data with ``interesting'' properties, typically quantified by the use of a score function. For instance, a basic subgroup discovery method may try to find regions of the data where the mean or distribution of some target features are markedly different from the rest of the data. Later work has addressed more complicated tasks such as finding regions with exceptional regression models \citep{duivesteijn2012different} or regions in which some pre-specified ML model works well \citep{sutton2020identifying}. Subgroups in this context are often specified by a \emph{pattern}, which in the KDD literature refers to (usually pre-defined) selector variables. For instance, these could be some pre-defined thresholds on the features. Selection of the best subgroup with respect to the chosen score function then typically proceeds via either an exhaustive or greedy search over the valid patterns. The existing literature does not provide theoretical guarantees on the correctness of the selected subgroup. In contrast, we provide an efficient algorithm (not requiring exhaustive search) with provable guarantees and with data-driven (rather than pre-defined) selection criteria.

Our problem framework also has connections to list-decodable learning \citep{charikar2017list}, specifically list-decodable linear regression \citep{karmalkar2019list, raghavendra2020list}. In the list-decodable setting, we assume that an $\a$ fraction of the data come from a ``trusted'' source which we are trying to model; this would correspond to the subset of our data belonging to the region we are trying to detect. The goal is to output a small list (polynomial in $\a^{-1}$) which contains a model that will perform well on the trusted data. While an algorithm for the list-decodable linear regression problem will return a model that performs well for the ``good'' region, it does not directly solve the problem of actually finding this region itself.

Piecewise linear regression is another method for adding flexibility to linear models while preserving interpretability. Here, the assumption is that the response is a piecewise linear function of the covariates. Early works focused on the one-dimensional covariate case \citep{vieth1989piecewise}, and recently methods have been proposed for piecewise linear regression in higher dimensions \citep{siahkamari2020piecewise, diakonikolas2020piecewise}. Unlike the piecewise linear setting, we make no assumptions on the regression function outside of the ``good'' region which we are trying to detect.

Our work is also similar in spirit to previous works on conditional linear regression \citep{juba2017conditional, calderon2020conditional}. In this setting, the goal is also to find the largest possible subset of the data for which there is an accurate linear model. However, similar to many methods from the KDD literature, the subgroup identification in this case is made in terms of \emph{pre-defined} binary features, which are assumed to be provided with the data in addition to the regressor variables. While one could instantiate our problem by defining the binary inclusion variables as indicators of whether or not each regressor is above or below a certain threshold, doing so would result in exponentially many possible selection rules and will therefore be computationally intractable for our setting. One can also view our work as finding data-driven binary inclusion labels for the conditional linear regression problem.

A core element of our problem setting is in selecting a region which avoids certain ``bad'' points. Related problems have been extensively studied in the computational geometry community \citep{dobkin1988box, backer2010box, dumitrescu2013boxapprox}, but even approximate algorithms for solving related problems are not practical for high dimensions, and indeed even some seemingly simple region selection problems can be shown to be NP hard \citep{backurs2016boxhard}. We propose tractable alternatives and show that they have desirable properties both theoretically and empirically.

As we seek to learn a subset of the data on which we are willing to make predictions, our work is connected to the literature on learning with rejection \citep{cortes2016rejection} or learning to defer \citep{madras2018defer, mozannar2020defer, keswani2021defer}, in which a model is given the option not to make a prediction. These works focus primarily on classification and decide whether or not to make a prediction on individual data point via thresholding model confidence. While this implicitly defines a subgroup on which we expect the model to perform well---namely, the points for which the model does not defer---, this subgroup will typically be uninterpretable (if the model is a neural network). If logistic regression is used, the subgroup will be the complement of a slab between two parallel hyperplanes, which may be considered interpretable but is fairly inflexible in terms of the region selected. \cite{wiener2012pointwise} also considered a similar model in the regression setting, where the learner must simultaneously learn a regression function $f$ and a selection function $g$ which specifies the group of points on which to make predictions. Similar to learning with rejection, in this case, the subgroup is defined implicitly via $g$ and in general will not be interpretable. In our setting, we focus on the regression problem and on explicitly defining an interpretable region in which we will not defer.

\section{Problem Setup} \label{sec: setup}
The general subgroup selection problem can be formulated as follows. Let $\cZ = \cX \times \cY$ denote the sample space, $\cF \subseteq \cY^\cX$ denote a class of functions (e.g. linear regression models), and let $\ell: \cY \times \cY \rightarrow \R$ be a loss function measuring the performance of our model. We will always have $\cX \subseteq \R^d$ and $\cY \subseteq \R$. Our goal is to find a (interpretable, large as possible) region $R\subseteq \cX$ of the \emph{feature space} where the loss $$\argmin_{f\in \cF} \E[\ell(y, f(x)) \: | \: x \in R]$$ is small. In order to satisfy the interpretability criterion, we will consider regions $R$ which are axis-aligned boxes. This corresponds to a subgroup where each feature lies within a specified range (corresponding to the sides of the axis-aligned box). The algorithm we develop also easily allows the user to control the tradeoff between the size of the region and the loss of the selected model within the region.

For this paper, we will specify the function class $\cF$ to be linear models and the loss $\ell(y, \hat{y}) = (y-\hat{y})^2$ to be the squared loss. For our theoretical results, we will assume that there exists a ``good'' region $R^* \subseteq \cX$ where the linear model is well-specified with low conditional variance of $y|x$. In particular, we will assume that when $x\in R^*$, we have $y|x \sim \cN(x^\T \b, \s^2)$ for some coefficients $\b$. In this case, the goal will be to recover $R^*$.

\section{Algorithmic Framework} \label{sec: alg}
We introduce an algorithmic framework with three distinct phases.
\begin{itemize}
    \item \textbf{Phase 1:} Compute a rough approximation to the regression function in the good region.
    \item \textbf{Phase 2:} Using the approximate fit, define labels $\ell_i$ for each point in the training data, where $\ell_i \approx \I\{x_i \textrm{ could not reasonably belong to } R^*\}$.
    \item \textbf{Phase 3:} Find a large region which contains no rejected points.
\end{itemize}

In this work, we give specific implementations of each phase, but we note that this general framework is modular and can likely be modified to work in other settings (e.g. classification or survival analysis).

In Phase 1, we find a ``core set'' of points which should belong to the good region, then fit a model to these points. %
%
For Phase 2, we reject points by thresholding the residuals from the model found in Phase 1. For Phase 3, we remark that even if it is known which points should be included or excluded from the region, actually computing the largest region consistent with these points is NP hard, even if we restrict ourselves to axis-aligned boxes \citep{backurs2016boxhard}.

\paragraph{Phase 1:}
We denote a dataset $D = (X, Y)$ to be a collection of $n$ feature vectors (collected in $X \in \R^{n\times d}$) and corresponding labels (collected in $Y \in \R^n$). Here $\textsc{KNN}(x, k, D)$ denotes the $k$ nearest neighbors of $x$ (and their corresponding labels) in the dataset $D$,  $\textsc{OLS}(D)$ denotes the output of ordinary least squares on feature matrix $X$ and response vector $Y$, and $\textsc{MSE}(\hat{\beta}, D)$ denotes the mean squared error of linear model $\hat{\beta}$ on the data $X, Y$.

The pseudocode for selecting the core group is provided in Algorithm \ref{alg: core group}. Given a choice of core group size $k$, for each datapoint, we fit a local model to that point's $k$ nearest neighbors. We then select the group of points with the lowest training error of its local model as the core group.

\begin{algorithm}[h!]
\caption{\textsc{CoreGroup}$(k, D)$} \label{alg: core group}
\begin{algorithmic}
\INPUT Core group size $k$, dataset $D$
\STATE $\textrm{MSE}^* \gets \infty$
\FOR{$(x, y) \in D$}
    \STATE $D_\textrm{nbhd} = (X_\textrm{nbhd}, Y_\textrm{nbhd}) \gets \textsc{KNN}(x, k, D)$
    \STATE $\hat{\beta} \gets \textsc{OLS}(X_\textrm{nbhd}, Y_\textrm{nbhd})$
    \IF{$\textsc{MSE}(\hat{\b}, D_\textrm{nbhd}) < \textrm{MSE}^*$}
        \STATE $D_\textrm{core} \gets D_\textrm{nbhd}$
        \STATE $\textrm{MSE}^* \gets \textsc{MSE}(\hat{\b}, D_\textrm{nbhd})$
    \ENDIF
\ENDFOR
\OUTPUT $D_\textrm{core}$
\end{algorithmic}
\end{algorithm}

\paragraph{Phase 2:}
For our theoretical results, we use the threshold
\begin{equation} \label{eq: grow method threshold}
\rho^\textrm{grow}_{\s, n} = 2.1\s \sqrt{\log n}.
\end{equation}
Here $n$ is the size of the training set. The inclusion labels $\ell_i$ are then computed as $\ell_i = \I\{|y_i - \hat{\b}^\T x_i| \geq \rho^\textrm{grow}_{\s, n}\}$. We define the set of \emph{rejected points} $X_\textrm{rej} = \{x_i \in X \: | \: \ell_i = 1\}$. For our empirical results, the threshold will be considered as a hyperparameter and chosen using a validation set. For more detail, refer to Section~\ref{sec: experiments}.

\paragraph{Phase 3:}
Let $U \subseteq \R^d$. We define the \emph{directed infinity norm} $\|x\|_{U,\infty}$ by
$$ \|x\|_{U,\infty} = \max_{u \in U} \: x^\T u. $$
We note that for many sets $U$, $\|\cdot\|_{U,\infty}$ may not be a norm, nor even a seminorm. In what follows, $U$ will initially be defined as $U = \{\pm e_i\}_{i=1}^d$, in which case $\|\cdot\|_{U,\infty} = \|\cdot\|_\infty$ coincides with the usual infinity norm on $\R^d$. We will then gradually remove directions which are no longer relevant to consider.

The region will be described in terms of linear constraints. We will overload notation and use a set $R = \{(u_i, a_i)\}_{i=1}^m$ of constraint directions and values to denote the region $R = \{x \in B \: : \: x^\T u_i \leq a_i\}$.

The pseudocode for the growing box is provided in Algorithm \ref{alg: growing box}. When $U = \{\pm e_i\}_{i=1}^d$, Algorithm~\ref{alg: growing box} begins expanding an $\ell_\infty$ ball centered at $\bar{x}$ with each side growing at an equal rate. Whenever one of the sides runs into a rejected point, we add the corresponding linear constraint and continue growing the other sides of the box. (The directed infinity norm is what we use to measure which point will collide with the box next. For a discussion on the geometric intuition for this step, see Appendix~\ref{appendix: directed infty norm}.) This continues until all sides of the box have a support point, or there are no points left to constrain the box.

Note that the set $U$ simply specifies the normal vectors to the sides of the constraint polytope. The lengths of these vectors effectively determine the speed at which the constraint region will grow in that direction. By changing $U$, this method can select polytopes of any desired shape. Since axis-aligned boxes provide easily interpretable inclusion criteria, we use such regions for all of our experiments.

\begin{algorithm}[h!]
\caption{\textsc{GrowBox}$(\bar{x}, X_{\textrm{rej}}, U)$} \label{alg: growing box}
\begin{algorithmic}
\INPUT Starting point (center) $\bar{x}$, rejected points $X_{\textrm{rej}}$, normal vectors defining the shape of the selected region $U$
\STATE $X_\textrm{rej} \gets X_\textrm{rej} + \{-\bar{x}\}$ \COMMENT{Center the points at $\bar{x}$. $+$ denotes Minkowski sum.}
\STATE $\hat{R} \gets \emptyset$
\WHILE{$X_{\textrm{rej}} \neq \emptyset$}
    \STATE $x^* \gets \argmin_{x\in X_{\textrm{rej}}} \{ \|x\|_{U, \infty} \}$
    \STATE $a^* \gets \|x^*\|_{U, \infty}$
    \STATE $u^* \gets \argmax_{u \in U} \{ u^\T x^* \}$ \COMMENT{$u^*$ is the next support direction for the polytope}
    \STATE Add $(u^*, a^*)$ to $\hat{R}$
    \STATE Remove $u^*$ from $U$
    \STATE $X_{\textrm{rej}} \gets \{x \in X_{\textrm{rej}} \: | \: x^\T u^* < a^* \}$
\ENDWHILE
\OUTPUT $\hat{R} + \{\bar{x}\}$ \COMMENT{Undo the centering procedure from the first part of the algorithm.}
\end{algorithmic}
\end{algorithm}

Combining Phases 1-3 gives an algorithm for automatic subgroup selection. We summarize the entire pipeline in Algorithm~\ref{alg: pipeline}. Note that if the variance $\s^2$ is not known, we can replace it with a standard unbiased estimate computed on the core group.

We also remark that after $\hat{R}$ has been selected, rather than using the coefficients $\hat{\b}$ fit just to the core group, we can also choose to re-fit $\hat{\b}$ on all of the training points contained in $\hat{R}$. We use this additional step in our experiments, but it does not affect our theoretical results.

\begin{algorithm}[h!]
\caption{\textsc{DDSubgroup}$(k, U, D)$} \label{alg: pipeline}
\begin{algorithmic}
\INPUT Core group size $k$, normal vectors defining the shape of the selected region $U$, dataset $D$
\STATE
\STATE \textbf{Phase 1:} Find a core group and fit a coarse model.
\STATE $D_\textrm{core} \gets \textsc{CoreGroup}(k, D)$
\STATE $\hat{\b} \gets \textsc{OLS}(D_\textrm{core})$
\STATE

\STATE \textbf{Phase 2:} Label which points should be excluded.
\FOR{$i = 1, \ldots, n$}
    \STATE $\ell_i \gets \I\{|y_i - \hat{\b}^\T x_i| \geq \rho^{\textrm{grow}}_{\s, n}\}$
\ENDFOR
\STATE $X_\textrm{rej} \gets \{x_i \in X \: | \: \ell_i = 1\}$
\STATE

\STATE \textbf{Phase 3:} Approximate $R^*$.
\STATE $\bar{x} \gets \textsc{Mean}(X_\textrm{core})$
\STATE $\hat{R} \gets \textsc{GrowBox}(\bar{x}, X_\textrm{rej}, U)$
\OUTPUT $\hat{R}$
\end{algorithmic}
\end{algorithm}

\paragraph{Runtime}
The runtime of \modelname as described by Algorithm~\ref{alg: pipeline} is $O(kn\log n)$. We treat the dimension as a constant. After constructing a K-D tree in $O(n\log n)$ time, the $k$-nearest neighbors of a point can be found in time $O(k\log n)$. Computing the OLS fit on $k$ points in constant dimension takes $O(k)$ time, making the runtime for each step of the core group search $O(k\log n)$ since we do not need to re-compute the K-D tree for each of these steps. This step is repeated $n$ times, once for each candidate core group. The box expansion requires only $O(n)$ work once the core group has been determined, thus the overall runtime for the algorithm is $O(n\log n) + O(kn\log n) + O(n) = O(kn\log n)$. While this is only the cost of a single run of \modelname, these runs can easily be parallelized, making \modelname highly efficient even for large datasets and large hypeparameter searches.

\section{Theoretical Guarantees} \label{sec: theory}
In this section, we examine some of the theoretical properties of \modelname. All proofs are deferred to Appendix~\ref{appendix: proofs}. In what follows, ``with high probability'' means with probability approaching 1 as $n, k \rightarrow \infty$. We make the following assumptions.
\begin{enumerate}
    \item The samples $(x_i, y_i) \iid \cP$ for a probability distribution $\cP$ on $\cZ$. We let $S = \textrm{supp}(x)$ denote the support of the marginal distribution of the features. \label{assumption: iid}
    \item The features $x$ are bounded: $\|x\| \leq B$ deterministically. \label{assumption: bdd 
    x}
    \item The marginal distribution of $x$ has a density $f$ with respect to the Lebesgue measure. Furthermore, the density is bounded from above and below on the support of $x$: $0 < c_f \leq f(x) \leq C_f < \infty$ for all $x \in S$. \label{assumption: density}
    \item There is a region $R^* \subseteq S$ in which the linear model holds. That is, conditional on $x \in R^*$, $y$ is generated according to the linear model: $x\in R^* \Rightarrow y|x \sim \cN(x^\T \b^*, \s^2)$ for some fixed $\b^*$. \label{assumption: linear}
    \item The region $R^*$ is an axis-aligned box with nonempty interior, i.e. $R^* = \prod_{i=1}^d [a_i, b_i]$ for some $a_i < b_i$. \label{assumption: nonempty}
    \item Conditional on $x\not\in R^*$, $y$ is Gaussian with variance at least $\s_0^2$, where $\s_0 \geq C\s$ for some absolute constant $C$. \label{assumption: outside}
\end{enumerate}
The lower bound in Assumption~\ref{assumption: density} ensures that the samples will cover the sample space (so that we can detect $R^*$). The upper bound prevents degeneracies, e.g., if the feature distribution contains atoms with large enough mass, the KNN of certain points may contain many copies of a single point. Assumption~\ref{assumption: linear} ensures that our model is well-specified on $R^*$. Assumption~\ref{assumption: outside} ensures that $R^*$ is in fact the ``best'' region for us to select, namely, there is no other region where we can have better predictive power. This condition also ensures that the random fluctuations in $y_i$ are large enough to be detected by the test. We remark that the absolute constant $C$ is no greater than 50, but we have made no effort to optimize this constant in our analysis and it can certainly be reduced.


Our first result shows that almost all of the group selected by Algorithm~\ref{alg: core group} lies in $R^*$.
\begin{restatable}{lem}{goodcore}
\label{thm: good core whp}
The core group selected by Algorithm~\ref{alg: core group} has $X_{\mathrm{core}} \setminus R^* = o(k)$ with high probability.
\end{restatable}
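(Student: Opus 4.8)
The plan is to certify that the minimum training error found by Algorithm~\ref{alg: core group}, namely the final value of $\mathrm{MSE}^*$, is close to $\s^2$, and to show separately that any neighborhood containing a non-negligible fraction of points outside $R^*$ must have training MSE bounded \emph{strictly above} $\s^2$. Comparing the two forces the selected core group to be almost entirely contained in $R^*$.

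First I would prove the upper bound $\mathrm{MSE}^* \le \s^2(1+o(1))$ with high probability by exhibiting a single good neighborhood. Since $R^*$ has nonempty interior (Assumption~\ref{assumption: nonempty}) and the density is bounded below (Assumption~\ref{assumption: density}), fix a point $x_0$ in the interior at distance $\eta > 0$ from $\partial R^*$. With high probability some sample $x_j$ lies within $\eta/2$ of $x_0$, and because the $k$-nearest-neighbor radius around $x_j$ scales like $(k/n)^{1/d}\to 0$ (using the density lower bound and the natural regime $k=o(n)$), the entire neighborhood $\textsc{KNN}(x_j,k,D)$ lies inside $R^*$. On this neighborhood the model is well specified (Assumption~\ref{assumption: linear}), so $y_i = \b^{*\T}x_i + \e_i$ with $\e_i \iid \cN(0,\s^2)$, the true mean vector lies in the column space of the design, and the OLS residual sum of squares is exactly $\s^2\chi^2_{k-r}$ for $r \le d+1$. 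Chi-squared concentration then yields $\mathrm{MSE}^* \le \s^2(1+o(1))$ whp.

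The crux is the matching lower bound: conditional on the feature matrix $X$, every neighborhood whose fraction of out-of-region points is $\alpha$ should have training MSE at least $\s^2\bigl(1+(C^2-1)\alpha\bigr) - o(1)$ whp. The key structural fact is that an OLS fit can interpolate at most $r\le d+1$ points, so if $m=\alpha k$ points lie outside $R^*$ their total leverage obeys $\sum_{i\in\mathrm{out}} h_i \le r$, whence $\sum_{i\in\mathrm{out}}(1-h_i)\ge m-r$. Writing the residual vector as $(I-P)Y$ with $Y\sim\cN(\mu,\Sigma)$ and $\Sigma=\mathrm{diag}(\s_i^2)$, the conditional mean of the residual sum of squares is
\[
\E\bigl[\,\|(I-P)Y\|^2 \mid X\,\bigr] = \|(I-P)\mu\|^2 + \sum_i (1-h_i)\s_i^2 \ge \s^2(k-r) + (\s_0^2-\s^2)(m-r),
\]
using $\s_i^2=\s^2$ inside $R^*$, $\s_i^2\ge\s_0^2$ outside (Assumption~\ref{assumption: outside}), and discarding the nonnegative term $\|(I-P)\mu\|^2$. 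Dividing by $k$ and using $r=O(1)$ and $\s_0\ge C\s$ gives the claimed expected gap; a Gaussian quadratic-form lower-tail bound together with a union bound over the $n$ candidate neighborhoods upgrades this to a uniform statement. Combining the two halves, the core group realizes $\mathrm{MSE}^*\le\s^2(1+o(1))$, so its out-of-region fraction $\alpha$ cannot remain bounded away from $0$ without contradicting the lower bound; hence $\alpha\to 0$, i.e. $X_{\mathrm{core}}\setminus R^* = o(k)$.

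I expect the main obstacle to be making the lower bound uniform over \emph{data-dependent} neighborhoods: the neighborhoods and the inside/outside labels are themselves functions of the random features, so I would condition on $X$ first and argue over the $n$ fixed candidate neighborhoods, which confines the analysis to a regime such as $k \gg \log n$ for the union bound to close. A secondary subtlety is that Assumption~\ref{assumption: outside} constrains the outside variances only from below, so I cannot directly invoke a chi-squared inequality with a uniform scale; since only a \emph{lower} bound on the residual sum of squares is needed, I would base it on the contribution of the typical (order-$\s_0$) outside points---a constant fraction of which have residuals of size $\gtrsim \s_0$ by anti-concentration---while the atypically large-noise points only inflate a quantity we are bounding below and can be safely discarded.
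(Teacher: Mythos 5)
Your skeleton matches the paper's: exhibit one all-inside neighborhood to certify $\mathrm{MSE}^* \le \s^2 + o(1)$, show that any neighborhood with an $\alpha$-fraction of outside points has training MSE at least $\s^2 + (\s_0^2-\s^2)\alpha - o(1)$, and compare. Your implementation of the lower bound is genuinely different, and its expectation part is cleaner: the leverage identity $\sum_i h_i = \mathrm{tr}(P) \le d+1$ immediately yields $\E[\|(I-P)Y\|^2 \mid X] \ge \s^2(k-r) + (\s_0^2-\s^2)(m-r)$, whereas the paper instead relaxes the single minimization into two separate minimizations over the inside and outside points and uses the rank-$d$ decomposition of the hat matrix (Lemma~\ref{thm: hat matrix}). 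Your upper bound (exact $\s^2\chi^2_{k-r}$ distribution of the RSS on a well-specified neighborhood) is also tighter than the paper's, which just plugs in $\b^*$ and applies a fourth-moment Chebyshev bound (Lemma~\ref{thm: chebyshev}). Your worry about uniformity over data-dependent neighborhoods is well placed; conditioning on $X$ and union-bounding over the $n$ candidates is the right move, and the paper is in fact less explicit about this than you are.

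Two concrete issues remain. First, the lower-tail concentration is asserted rather than supplied, and your proposed patch for the unbounded outside variances---anti-concentration of individual residuals of the ``typical'' outside points---does not go through as stated: the residuals $(I-P)Y$ are coupled through $P$, and an individual residual $e_i=(1-h_{ii})y_i-\sum_{j\ne i}h_{ij}y_j$ need not inherit anti-concentration from $y_i$ alone (e.g.\ when $h_{ii}$ is near $1$). The paper's device for handling both the non-centrality $(I-P)\mu$ and the heteroscedasticity in one stroke is Lemma~\ref{thm: anderson}: an Anderson-type stochastic dominance statement showing that shifting the mean by an independent vector and enlarging the variances can only increase $\|\cdot\|$ stochastically, applied after observing that $(I-H)Z$ is a standard Gaussian on a codimension-$d$ subspace. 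You would need this or an equivalent dominance argument to close the tail bound. Second, your upper-bound construction invokes the regime $k=o(n)$ so that the KNN radius shrinks, but the lemma is ultimately applied with $k=\Omega(n)$ (required by Theorem~\ref{thm: main} and Lemma~\ref{thm: lambda min}); the fix is the paper's Lemma~\ref{thm: knn in Rstar}, which fits the neighborhood inside a fixed ball $\cB(\bar{x},r)\subseteq R^*$ whenever $k\le c_1 n$ for a small enough constant $c_1$, with no shrinking radius needed.
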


The next result states that we will not erroneously reject any points that actually belong to $R^*$.
\begin{restatable}{lem}{nobadrej} \label{thm: no bad rej}
Let $X_{\mathrm{core}}$ be the core group selected by Algorithm~\ref{alg: core group} and let $\hat{\b}$ be the OLS estimator fit to $X_{\mathrm{core}}$. Let $X_\textrm{rej}$ be the set of rejected points defined by the thresholding procedure in Phase 2. With high probability, none of the points in $X_\textrm{rej}$ belong to $R^*$.
\end{restatable}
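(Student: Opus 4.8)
The plan is to show that for \emph{every} sample point $x_i \in R^*$, the residual $|y_i - \hat{\b}^\T x_i|$ falls strictly below the threshold $\rho^{\textrm{grow}}_{\s,n} = 2.1\s\sqrt{\log n}$ with high probability; no such point is then placed in $X_\textrm{rej}$. For $x_i \in R^*$, Assumption~\ref{assumption: linear} gives $y_i = x_i^\T\b^* + \e_i$ with $\e_i \sim \cN(0,\s^2)$, so by the triangle inequality
\begin{equation}
|y_i - \hat{\b}^\T x_i| \;\le\; |\e_i| + |x_i^\T(\b^* - \hat{\b})| .
\end{equation}
I would split the budget $2.1\s\sqrt{\log n}$ between these two terms, allocating (just over) $\sqrt{2}\,\s\sqrt{\log n}$ to the noise and reserving the remaining slack for the estimation error.

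For the noise term, I would take a union bound over the at most $n$ points. Since $\P(|\e_i| \ge c\s\sqrt{\log n}) \le 2n^{-c^2/2}$, any choice $c > \sqrt{2}$ yields $\max_i |\e_i| \le c\s\sqrt{\log n}$ with probability at least $1 - 2n^{1 - c^2/2} \to 1$. Thus the noise contribution is at most $(\sqrt{2} + o(1))\s\sqrt{\log n}$, comfortably below $2.1\s\sqrt{\log n}$, which is precisely why the constant in $\rho^{\textrm{grow}}_{\s,n}$ is chosen strictly larger than $\sqrt{2}$: it leaves a constant fraction of the threshold as headroom for the estimation error.

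The substantive step is to show $\sup_{x \in R^*} |x^\T(\b^* - \hat{\b})| = o(\s\sqrt{\log n})$. Using $|x^\T(\b^*-\hat{\b})| \le B\|\b^* - \hat{\b}\|$ (Assumption~\ref{assumption: bdd x}), it suffices to prove $\|\hat{\b} - \b^*\| \to 0$ fast enough. Here I would invoke Lemma~\ref{thm: good core whp}: the core group consists of $k - o(k)$ points drawn from $R^*$, where the linear model is well specified, plus only $o(k)$ contaminating points. Writing $\hat{\b} = (X_{\mathrm{core}}^\T X_{\mathrm{core}})^{-1} X_{\mathrm{core}}^\T Y_{\mathrm{core}}$ and decomposing the error into (i) the stochastic term $(X_{\mathrm{core}}^\T X_{\mathrm{core}})^{-1}\sum_{\mathrm{good}} x_i \e_i$ and (ii) the bias from the $o(k)$ bad points, I would bound each in terms of $\lambda_{\min}(X_{\mathrm{core}}^\T X_{\mathrm{core}})$. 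Controlling this smallest eigenvalue is where Assumption~\ref{assumption: density} enters: a density bounded below forces the $k$ nearest neighbors to spread out in all $d$ directions rather than collapse onto a lower-dimensional set, so the design is non-degenerate.

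The main obstacle is precisely this conditioning, coupled with extrapolation. The core group is a $k$-nearest-neighbor ball whose radius $r$ shrinks as $n$ grows, so the design is well conditioned only at scale $r$: the directions orthogonal to the neighborhood center are estimated with variance of order $\s^2/(k r^2)$. Since a point $x_i \in R^*$ may lie a constant distance from the core group, the fit must be \emph{extrapolated} across all of $R^*$, and making the resulting error $o(\s\sqrt{\log n})$ requires $k$ to grow quickly enough relative to $n$ that $\sqrt{k}\,r \to \infty$. Simultaneously, I must rule out that the $o(k)$ contaminating points—whose labels need not follow $x^\T\b^*$—introduce a non-vanishing bias; I expect this to be handled by combining the bounded-feature bound $\|x\| \le B$ with the defining low-MSE property of the core group, which caps how large the residuals of any included bad points can be and thereby limits their leverage on $\hat{\b}$.
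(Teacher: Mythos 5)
Your proposal is correct and follows essentially the same route as the paper: the same triangle-inequality split of the residual into noise plus estimation error, the same union bound over the $n$ in-region noise terms (your $\sqrt{2}$ constant is in fact slightly sharper than the paper's $\approx 2$, and both leave room under $2.1$), and the same strategy of proving $\|\hat\b-\b^*\|=o(1)$ by separating the OLS stochastic error from the contamination bias of the $o(k)$ bad core points, controlled via $\lambda_{\min}$ of the design (Lemma~\ref{thm: lambda min}) and the core group's low-MSE property (Lemmas~\ref{thm: beta hat bdd} and~\ref{thm: XTY is close}). The only remark worth adding is that your worry about the shrinking KNN radius does not arise in the paper's regime, since $k=\Omega(n)$ forces the core neighborhood to have constant radius and hence a well-conditioned design at unit scale.
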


Combining Lemmas~\ref{thm: good core whp} and \ref{thm: no bad rej}, we show that \modelname precisely recovers $R^*$ given sufficient data.
\begin{restatable}{thm}{main} \label{thm: main}
As $n\rightarrow\infty$, there exist positive scalars $\{s^\pm_j\}_{j=1}^d$ and a constant $c > 0$ such that if $U = \{s^+_j e_j, -s^-_j e_j\}_{j=1}^d$ and $k = \Omega(n)$ with $k \leq cn$, Algorithm~\ref{alg: pipeline} returns $\hat{R}$ with $R^* \subseteq \hat{R}$ with high probability. Furthermore, $\mathrm{vol}(\hat{R} \setminus R^*) \rightarrow 0$.
\end{restatable}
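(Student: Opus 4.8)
The plan is to treat the two conclusions separately, using \Cref{thm: good core whp} and \Cref{thm: no bad rej} as black boxes. Both hinge on understanding where the box produced by \textsc{GrowBox} stops in each of the $2d$ directions, so I would begin by pinning down the center $\bar{x}$ and the scalars $s^\pm_j$. Since $X_{\mathrm{core}}\setminus R^* = o(k)$ by \Cref{thm: good core whp} and the features are bounded (\Cref{assumption: bdd
    x}), the $o(k)$ stray points perturb the mean by $o(1)$, so $\bar{x}=\textsc{Mean}(X_{\mathrm{core}})$ concentrates on the centroid $\bar{x}_\infty$ of the core ball; this ball has positive volume since $k=\Theta(n)$ and is contained in $R^*$ once $c$ is taken below the mass of $R^*$, so $\bar{x}_\infty\in\mathrm{int}(R^*)$. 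I would then define the scalars by $s^+_j = 1/(b_j - (\bar{x}_\infty)_j)$ and $s^-_j = 1/((\bar{x}_\infty)_j - a_j)$, whose effect is that the box $\{x : x^\T u \le 1 \ \forall u\in U\}$ centered at $\bar{x}_\infty$ coincides exactly with $R^*$. Equivalently, in centered coordinates $\tilde{x}=x-\bar{x}$ one has $\|\tilde{x}\|_{U,\infty} = \max_j\max(s^+_j\tilde{x}_j, -s^-_j\tilde{x}_j)$, with $\|\tilde{x}\|_{U,\infty}\le 1 \iff x \in R^*$.

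For the containment $R^* \subseteq \hat{R}$, the key observation is that this choice of $U$ makes the directed infinity norm assign each rejected point to a face it genuinely lies beyond. By \Cref{thm: no bad rej} no rejected point lies in $R^*$, so every $x\in X_{\mathrm{rej}}$ has $\|x-\bar{x}\|_{U,\infty} > 1$; moreover the maximizing direction $u^* = s^+_j e_j$ (resp.\ $-s^-_j e_j$) forces $\tilde{x}_j > 1/s^+_j = b_j - \bar{x}_j$, i.e.\ $x_j > b_j$ (resp.\ $x_j < a_j$). Since \textsc{GrowBox} places the face with normal $u^*$ exactly through its constraining point, every face of $\hat{R}$ is pushed strictly beyond the corresponding face of $R^*$, regardless of the order in which the constraints are added; hence $R^*\subseteq\hat{R}$. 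The only slack is that $\bar{x}$ differs from $\bar{x}_\infty$ by $o(1)$, so the nominal box is a vanishingly small perturbation of $R^*$ and the assignment can misfire only in $o(1)$-thin corner regions; I would absorb this by passing to the limit and showing the missed portion of $R^*$ has volume $o(1)$.

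For $\mathrm{vol}(\hat{R}\setminus R^*)\to 0$ I need rejected points to densely surround $R^*$, so that each face stops just outside $R^*$. Fix $\e>0$ and a small $\d>0$, and consider the slab $\{x : b_j < x_j < b_j+\e,\ x_{j'}\in[a_{j'}+\d, b_{j'}-\d]\ \forall j'\ne j\}$ just outside face $j$. A point here is assigned to direction $+e_j$ (its scaled $j$-coordinate exceeds $1$ while all others stay below $1$) and is not removed by any other face's constraint. By \Cref{assumption: density} this slab contains $\Theta(n)$ samples, and by \Cref{assumption: outside} each has conditional standard deviation at least $\s_0\ge C\s$, so even in the worst case where the outside mean aligns with $\hat{\b}^\T x$, the probability that a given residual exceeds $\rho^{\mathrm{grow}}_{\s,n}=2.1\s\sqrt{\log n}$ is at least of order $n^{-(2.1/C)^2/2}$. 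The expected number of rejected points in the slab is then $\Omega(\e\, n^{1-(2.1/C)^2/2})$, which diverges precisely because $C$ is taken large enough that $(2.1/C)^2 < 2$; hence whp the slab contains a rejected point, which constrains face $j$ to within $\e$ of $b_j$. A union bound over the $2d$ faces shows every face lands within $\e$ of $R^*$ whp, so $\hat{R}\subseteq R^*_\e$ and $\mathrm{vol}(\hat{R}\setminus R^*)\le \mathrm{vol}(R^*_\e\setminus R^*)=O(\e)$; letting $\e\to 0$ after $n\to\infty$ gives the claim.

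I expect the main obstacle to be the coupling in the first two paragraphs: the center $\bar{x}$ is random, yet the scalars $s^\pm_j$ that make the directed-infinity-norm assignment geometrically correct are determined by $\bar{x}$'s location within $R^*$. Controlling the $o(1)$ discrepancy between $\bar{x}$ and $\bar{x}_\infty$---and showing it only erodes an asymptotically negligible sliver of $R^*$ rather than a whole face---is the delicate step; the rejection-density estimate for the volume bound is a more routine concentration argument, though it is what pins down the constant $C$ in \Cref{assumption: outside}.
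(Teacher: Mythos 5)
Your proposal follows the same overall architecture as the paper's proof: place the center in the interior of $R^*$, choose the scalars so that $R^*$ becomes the unit ball of the directed infinity norm, deduce $R^*\subseteq\hat{R}$ from Lemma~\ref{thm: no bad rej}, and deduce tightness from the presence of rejected points in thin slabs just outside each face. Two differences are worth noting. First, for tightness the paper bounds the \emph{maximum} residual over the $\Theta(\e n)$ points in each slab via an extreme-value estimate (Lemma~\ref{thm: gaussian max}), showing it exceeds $2.1\s\sqrt{\log n}$ once $\s_0>50\s$; you instead lower-bound the \emph{per-point} rejection probability by $n^{-(2.1/C)^2/2}$ and argue the expected number of rejections in the slab diverges. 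Your route needs only $C>2.1/\sqrt{2}$ rather than $C\approx 50$, and (with the independence of the $y_i$ conditional on the features and on $\hat{\b}$) it does yield a rejected point in each slab with high probability; the paper's route, by taking $\e=n^{-1/2}$, additionally produces an explicit rate $\hat{R}\subseteq R^*_{n^{-1/2}}$, though your argument could be pushed to a rate as well by letting $\e\to0$ slowly.

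The one genuine gap is in your handling of the center. You define $s^\pm_j$ from a deterministic limit $\bar{x}_\infty$ and then must control the $o(1)$ discrepancy with the realized $\bar{x}$; as you yourself observe, this only lets you conclude that the portion of $R^*$ missed by $\hat{R}$ has vanishing volume, which is strictly weaker than the stated conclusion $R^*\subseteq\hat{R}$. The theorem (see the remark immediately following it) explicitly allows the scalars to depend on the dataset, and the paper exploits this: it sets $s^\pm_j$ to be determined by the distances from the \emph{realized} $\bar{x}$ to the faces of $R^*$, so that the rescaled $R^*$ is exactly the centered unit $\ell_\infty$ ball, every rejected point has directed norm strictly greater than $1$, and exact containment follows with no coupling issue. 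With that substitution (and Lemma~\ref{thm: avg is in R*} guaranteeing $\bar{x}\in\mathrm{int}(R^*)$), the delicate step you flag disappears and your argument goes through.
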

We remark that the scalars $s^\pm_j$ can depend on the dataset and the constant $c$ may depend on $R^*$ and the other parameters in Assumptions~\ref{assumption: iid}-\ref{assumption: outside}.

As an immediate corollary to Theorem~\ref{thm: main}, we see that under slightly modified assumptions, \modelname can be used to find multiple subgroups in the data by iteratively applying Algorithm~\ref{alg: pipeline}.
\begin{cor} \label{thm: multi}
Suppose that Assumptions~\ref{assumption: iid}-\ref{assumption: outside} hold, but instead of a single region $R^*$, there are multiple disjoint regions $R_g$, $g=1,\ldots,G$ where for $x \in R_g$, $y|x \sim \cN(\b_g^\T x, \s_g^2)$. Furthermore, assume that Assumption~\ref{assumption: outside} holds with $\s_0 > C\s_g$ whenever the $x_i \not\in \bigcup_{g=1}^G R_g$. Let $\hat{R}_g$, $g=1,\ldots,G$ be the outputs after running Algorithm~\ref{alg: pipeline} $G$ times, removing the training points which are contained in $\hat{R}_g$ after the $g$-th run. Then under the same conditions as in Theorem~\ref{thm: main}, we have $R_g \subseteq \hat{R}_g$ and $\vol(\hat{R}_g \setminus R_g) \rightarrow 0$ for all $g$.
\end{cor}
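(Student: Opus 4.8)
The plan is to argue by induction on the number of runs, reducing each application of \modelname to the single-region guarantee of \cref{thm: main}. First I would relabel the regions in the order in which the algorithm discovers them, so that the $g$-th run targets $R_g$. The base case is essentially \cref{thm: main} itself: on the first run the data consist of the $G$ good regions together with the high-variance ``bad'' region outside $\bigcup_g R_g$, and I would show that the three phases behave exactly as in the single-region analysis, with $R_1$ playing the role of $R^*$ and everything else (the remaining good regions plus the bad region) playing the role of the complement.

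For the inductive step, suppose the first $g-1$ runs have returned $\hat R_1,\dots,\hat R_{g-1}$ with $R_j \subseteq \hat R_j$ and $\vol(\hat R_j\setminus R_j)\to 0$, and that the corresponding training points have been removed. Because each $\hat R_j$ differs from $R_j$ by vanishing volume, the removal excises (up to $o(n)$ points) precisely the samples lying in $R_1,\dots,R_{g-1}$ and leaves the conditional law of the remaining data---and in particular the density bounds of \cref{assumption: density} on the surviving support---unchanged. The $g$-th run therefore sees a fresh instance of the multi-region problem on regions $R_g,\dots,R_G$ plus the bad region, and I would re-run the three ingredients: (i) the analog of \cref{thm: good core whp} shows the core group lands inside a single surviving good region, namely the one of smallest conditional variance, which I take to be $R_g$ after relabeling, so that $\hat{\b}$ estimates $\b_g$; (ii) the analog of \cref{thm: no bad rej} shows no point of $R_g$ is rejected, using that the threshold $\rho^{\textrm{grow}}_{\s,n}$ is calibrated to the variance $\s_g^2$ estimated on the core group; and (iii) the box-growing argument from \cref{thm: main} shows that $\hat R_g \supseteq R_g$ and $\vol(\hat R_g\setminus R_g)\to 0$. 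Since $G$ is constant, the finitely many compounded $o(k)$ and $o(1)$ error terms remain negligible.

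The main obstacle is step (iii): guaranteeing that the recovered box hugs $R_g$ rather than leaking into a neighbouring good region. The rejection mechanism of \cref{thm: main} constrains the box only through rejected points, and a point of another good region $R_{g'}$ is rejected essentially only when its residual against $\hat{\b}\approx\b_g$ exceeds $\rho^{\textrm{grow}}_{\s,n}=2.1\s\sqrt{\log n}$. Since this threshold grows (slowly) with $n$ while a systematic residual $(\b_{g'}-\b_g)^\T x$ stays bounded (the features are bounded by assumption) and the noise scale $\s_{g'}$ is small, such points are eventually not rejected, so one cannot in general rely on other good regions to stop the box. The clean way to close this is to use that the good regions are separated by the bad region, so that in every coordinate direction the support immediately outside $R_g$ consists of high-variance points; by \cref{assumption: outside} (in its modified form $\s_0 > C\s_g$) each such point is rejected with at least constant probability, and since their number in any fixed-volume neighbourhood grows linearly in $n$, at least one is rejected with high probability, forcing each face of the box to within $o(1)$ of the corresponding face of $R_g$---exactly as in the proof of \cref{thm: main}. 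Verifying that this isolation survives the successive removals, and that the core-group selection consistently prefers an entirely-contained good region over one straddling a boundary or the bad region, is the technical heart of the argument.
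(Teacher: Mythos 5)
The paper offers no written proof of \cref{thm: multi}; it is asserted as an ``immediate corollary'' of \cref{thm: main}, and the intended argument is exactly the induction you describe: run the single-region analysis, remove $\hat{R}_g$, and repeat, with the finitely many compounded $o(k)$ and $o(1)$ terms absorbed since $G$ is constant. So your overall route coincides with what the paper implicitly intends, and your reduction of each run to the three ingredients (\cref{thm: good core whp}, \cref{thm: no bad rej}, and the box-growing step) is the right decomposition.

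The obstacle you flag in step (iii) is genuine, and it is worth being explicit that it is a gap in the corollary's hypotheses rather than in your argument. The statement assumes only that the $R_g$ are \emph{disjoint}, which permits two good boxes sharing a face. In that case two things break: first, the proof of \cref{thm: main} stops each face of the growing box using the $\Omega(\sqrt{n})$ high-variance points in an $n^{-1/2}$-shell just outside $R^*$ (via the Gaussian-max lemma, needing $\s_0 > C\s$), and a face abutting another good region $R_{g'}$ has no such points --- as you note, residuals there are $(\b_{g'}-\b_g)^\T x + \cN(0,\s_{g'}^2)$, which is $O(1)$ against a threshold growing like $\sqrt{\log n}$, so the box leaks; second, the core-group lower bound in \cref{thm: good core whp} compares $\s_g^2$ against $\s_g^2 + (\s_0^2-\s_g^2)\d$ for groups with a $\d$-fraction outside $R_g$, and this separation vanishes when the outside points lie in another low-variance good region, so a straddling neighborhood can win the MSE competition. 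Your proposed fix --- requiring that in every coordinate direction the support immediately outside each $R_g$ consist of $\s_0$-variance points --- is the right additional condition and restores both steps; one small correction is that a single shell point is rejected with probability $\approx n^{-(2.1\s_g/\s_0)^2/2}=n^{-o(1)}$, not constant, but with $\Omega(\sqrt{n})$ shell points this still yields a rejection with high probability, matching the mechanism in the paper's proof of \cref{thm: main}. With that separation condition made explicit, your induction closes; without it, neither your sketch nor the paper's one-line assertion is a complete proof of the corollary as stated.
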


\section{Experiments} \label{sec: experiments}
In this section, we evaluate the performance of \modelname on both synthetic and real-world medical datasets.

\paragraph{Methods for Comparision} We compare \modelname with several other baselines.
\begin{enumerate}
\item Standard linear regression, i.e., a linear model fit to the whole dataset. It is equivalent to the situation where the selected region includes all of the data and it is the method employed by the original medical studies on the real-world datasets we consider.
\item An unsupervised clustering method. Here we use $k$-means clustering and identify the cluster with the smallest MSE as the most coherent subgroup. We use the bounding box defined by the selected subgroup as the interpretable inclusion criteria.
\item Linear model trees. These are decision trees with a linear regression model in each leaf \citep{wang1996induction, potts2005incremental}.
Though LMT is not designed for subgroup identification, we can still use its decision path as a way to select cohorts. In order to identify the most coherent subgroup, we pick the leaf of the LMT with the smallest MSE. 
\end{enumerate}

\paragraph{Experiment Setup} 
For the real-world datasets, we randomly split them into training, test and validation sets, with ratio 50\%, 30\% and 20\%. In each experiment, we fit the models on the training set with a grid search over hyperparameters and select the region with lowest validation MSE. We then refit the linear model on the training points in the selected region and evaluate its performance on the test set.
For \modelname,
we used a more general form of the threshold $\rho_{\g_1, \g_2}(x_i) = \s\g_1 \|x_i\| + \s\g_2$ and tuned $\g_1$ and $\g_2$ as additional hyperparameters. Specifically, the algorithm works well by simply setting $\g_2 = 0$ and tuning $\g_1 \in \{2^{-4}, 2^{-3}, \ldots, 2^5\}$. We also set the size $k$ of the core group equal to $p$ times the size of the training set, where $p$ was selected from within $\{0.01, 0.05, 0.1, 0.15, 0.2\}$. We also tried two different ``speed'' settings for Algorithm~\ref{alg: growing box}: the sides of the box either grow all at the same rate, or each side grows at a rate proportional to the length of the bounding box $B$ in that dimension.
For $k$-means clustering, the number of clusters is a critical parameter and is scanned from 2 to twice the dimension of the data for the best performance.
For LMT, the tree depth is an important parameter and is scanned from 1 to the dimension of the data on the validation set for the best performance.

\subsection{Demonstration on Synthetic Data}
To visualize our method and test its performance in a well-specified setting, we construct a synthetic dataset where the desired region to be selected is known. Let $B \subseteq \R^d$ be the feature space, and let $R^* \subseteq B$ be the ``true'' region that we wish to recover. The data are generated as follows. We first sample the features $x \sim \mathrm{Unif}(B)$. If $x \in R^*$, set $y = \b^\T x + \e_\mathrm{in}$. Else if $x \not\in R^*$, set $y = \e_\mathrm{out}$. Here $\b \neq 0 \in \R^d$ are the fixed true model weights for the region $R^*$. The error terms $\ein$ and $\eout$ follow $\ein \sim \cN(0, \sin^2)$ and $\eout \sim \cN(0, \sout^2)$ with $\sin < \sout$. We set the dimension $d=3$ so that the selected region can be easily visualized. (The third dimension just allows us to incorporate a bias term, so we will only visualize two dimensions.) We define the bounding box for the features $B = [-1, 1]^2 \times \{1\}$ and the true region $R = [-1/3, 1/3] \times [-2/3, 2/3] \times \{1\}$, and we generate $n=1000$ data points.

Figure~\ref{fig: synthetic region} shows the results of running Algorithm~\ref{alg: pipeline} on this synthetic data. The gray shaded region is $R^*$. The red ``x'' (resp. blue ``o'') markers denote points that were rejected (resp. not rejected) by the threshold \eqref{eq: grow method threshold}, and the green rectangle shows the boundary of $\hat{R}$ returned by \modelname. There is a nearly perfect overlap between $R^*$ and $\hat{R}$, meaning \modelname is able to precisely recover the true region.
In contrast, the green rectangles in Figure~\ref{fig: synthetic region clustering} and Figure~\ref{fig: synthetic region LMT} shows the region selected by $k$-means clustering and LMT. The $k$-means clustering method erroneously excludes points within the correct subgroup, while LMT tends to select points outside of $R^*$.

Figure~\ref{fig: synthetic robustness} shows the robustness of \modelname to a misspecified core group. We replace the output of Algorithm~\ref{alg: core group} with a manually supplied set of points. We start by providing a core group whose center coincides with that of $R^*$. The $x$-axis of the plot denotes the offset of this initial core group: at position $x$ on the plot, the center of the core group has been shifted by $(x,x)$. Because we grow the sides of $\hat{R}$ at the same speed, it becomes harder to recover the full $R^*$ when the center of the core group is closer to the edge of $R^*$ (larger $x$ value on the plot). We plot three quantitites:
\begin{itemize}
    \item Precision $= \vol(\hat{R} \cap R^*) / \vol(\hat{R})$,
    \item Recall $= \vol(\hat{R} \cap R^*) / \vol(R^*)$, and
    \item F1 score.
\end{itemize}
The vertical dashed black line denotes the point at which the core group starts to include points which do not belong to $R^*$. The vertical dashed red line denotes the point at which the center of the core group (and thus the base point from which we grow $\hat{R}$) lies outside of $R^*$. We see that \modelname is quite robust to the location of the core group within $R^*$. However, once ``bad'' points are included in the core group, the performance (in particular the recall) begins to drop sharply. The precision is more robust to core group misspecification, remaining well above the baseline of $0.22$ (which is equivalent to selecting the whole region) even when the core group is more than $50\%$ misspecified.


\newcommand{\subfigwidth}{.24\textwidth}
\begin{figure*}[t]
\centering
\begin{subfigure}[t]{\subfigwidth}
\centering
  \includegraphics[width=1\linewidth]{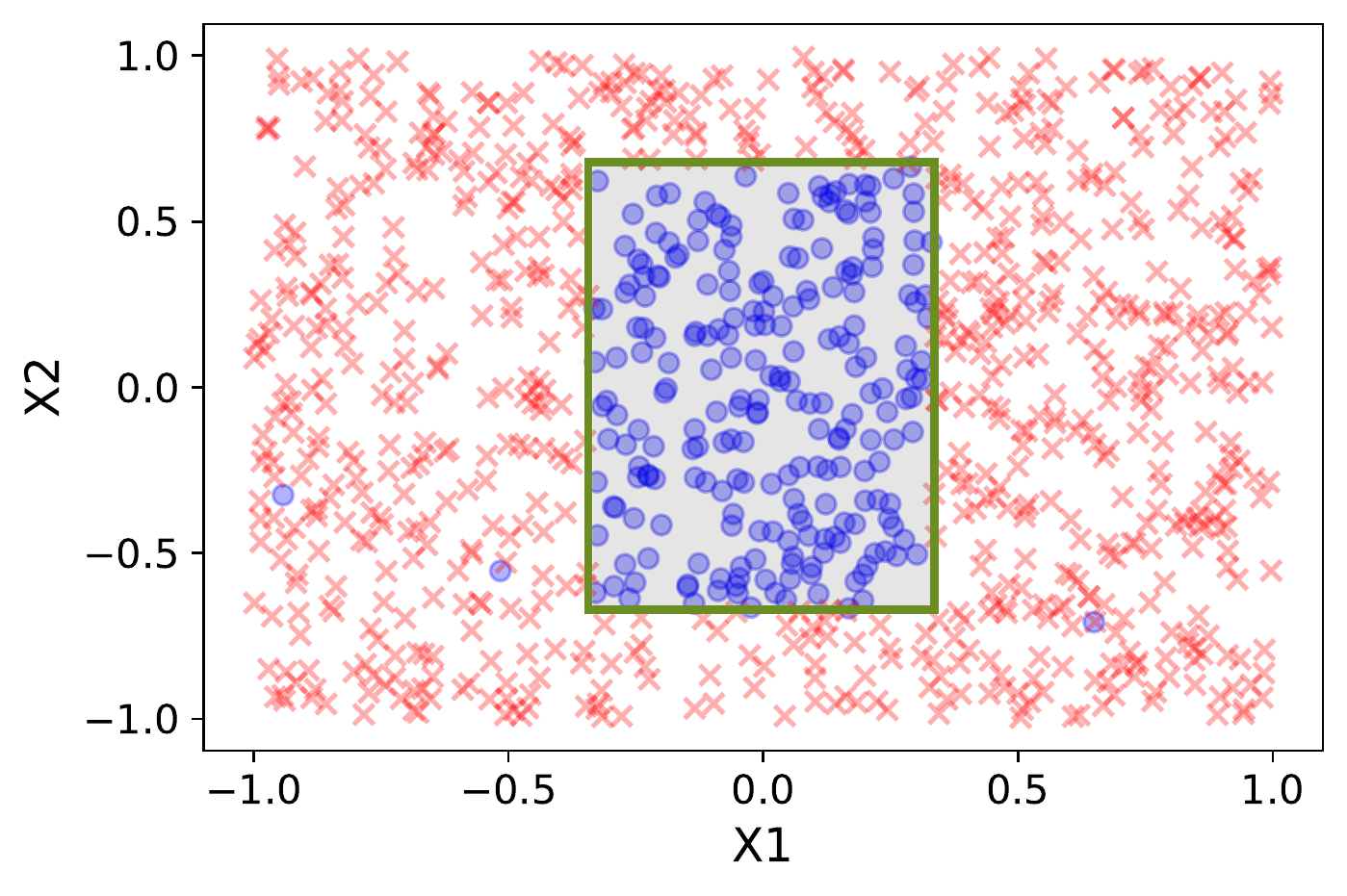}
  \caption{Region selected by \modelname.}
  \label{fig: synthetic region}
\end{subfigure} \hfill
\begin{subfigure}[t]{\subfigwidth}
\centering
  \includegraphics[width=1\linewidth]{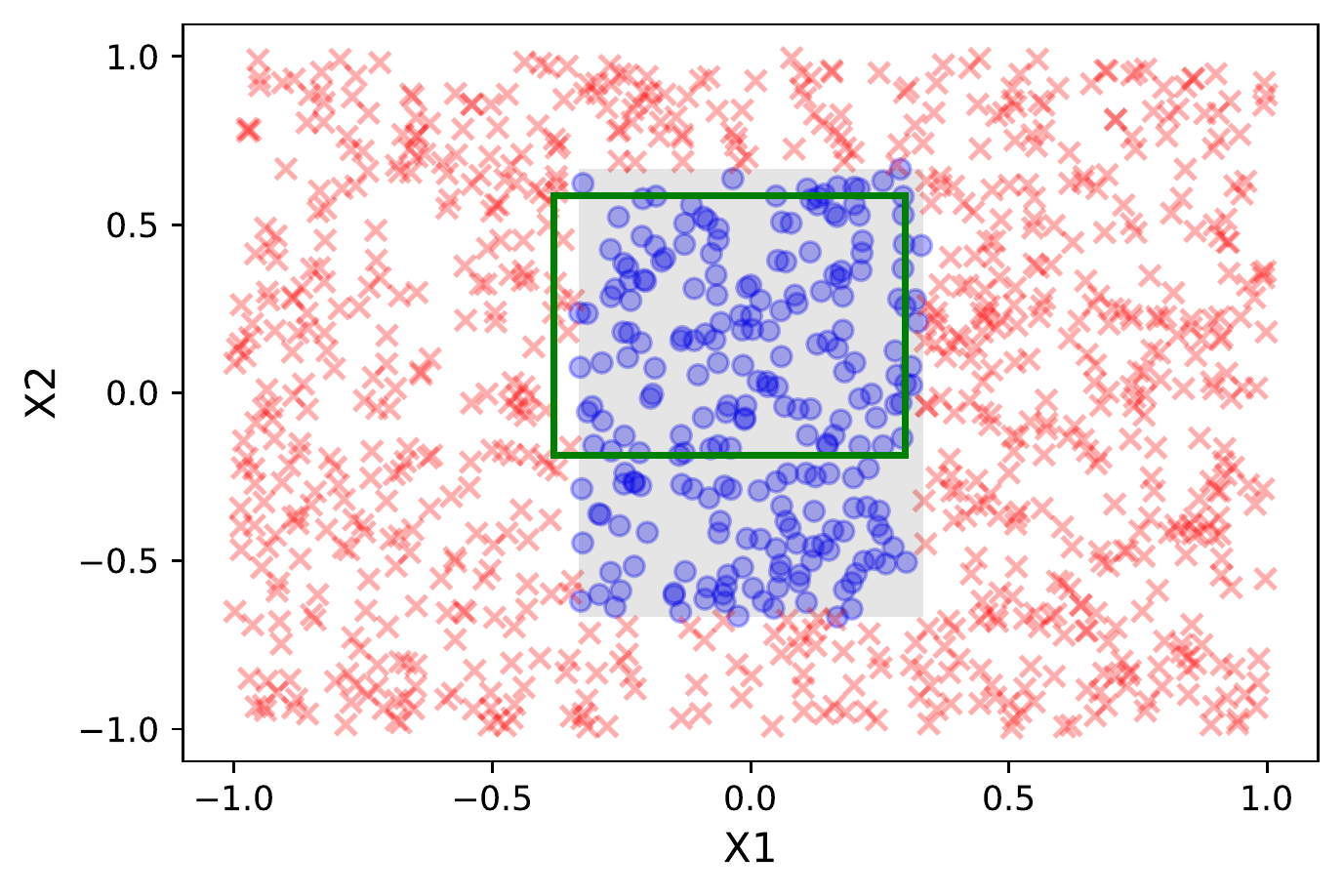}
  \caption{Region selected by clustering.}
  \label{fig: synthetic region clustering}
\end{subfigure} \hfill
\begin{subfigure}[t]{\subfigwidth}
\centering
  \includegraphics[width=1\linewidth]{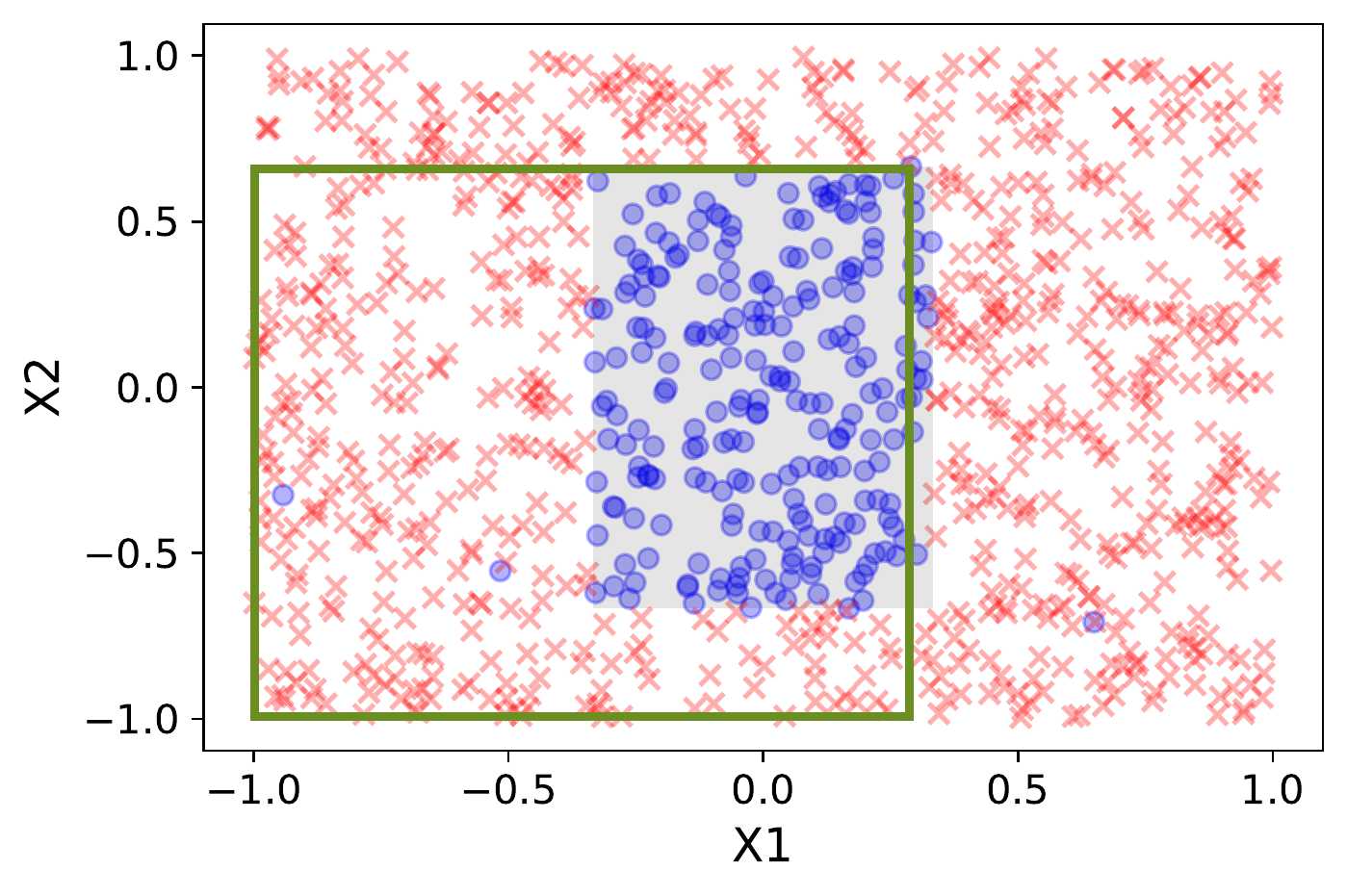}
  \caption{Region selected by LMT.}
  \label{fig: synthetic region LMT}
\end{subfigure} \hfill
\begin{subfigure}[t]{\subfigwidth}
  \centering
  \makebox[\textwidth][c]{\includegraphics[width=0.97\linewidth]{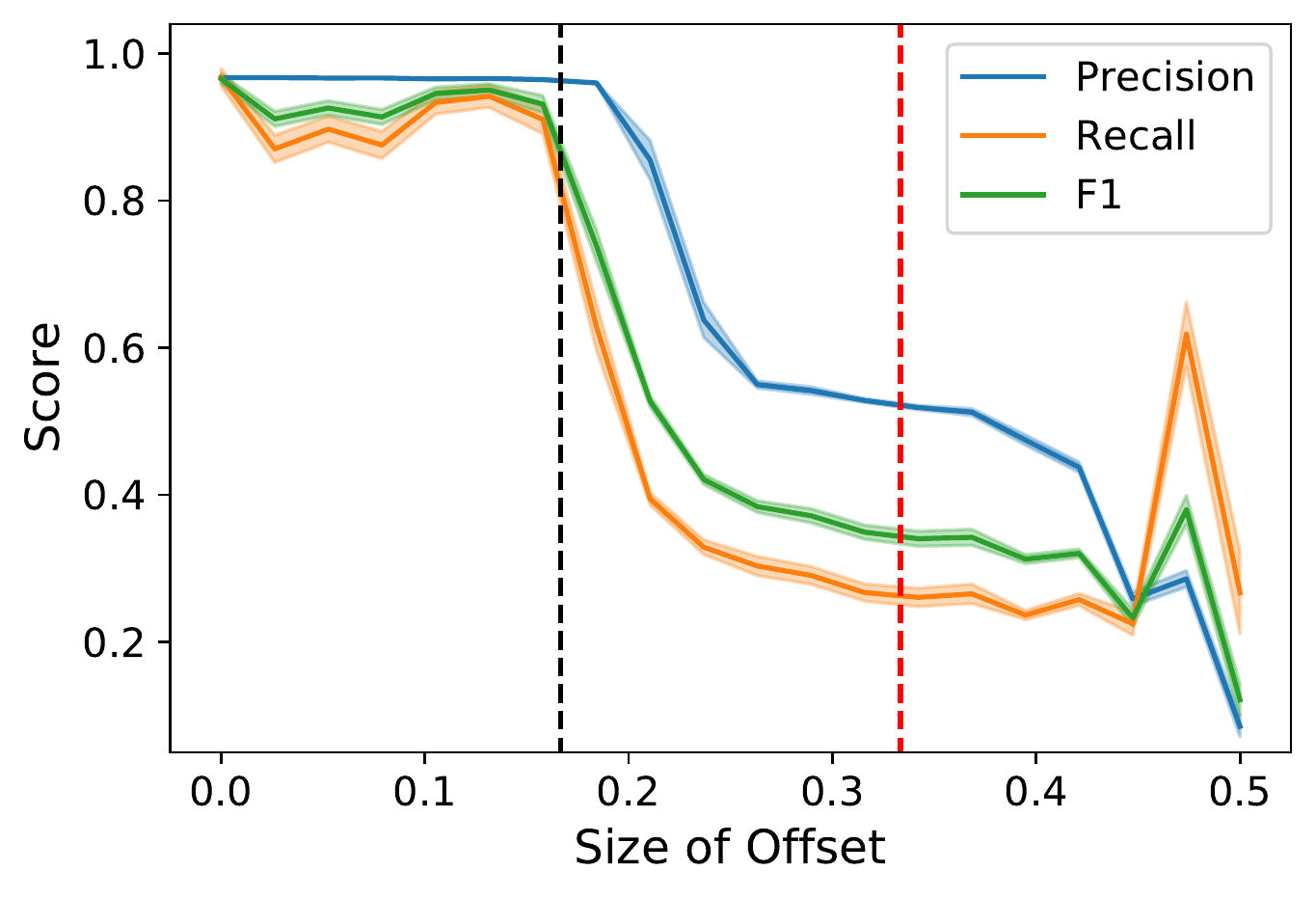}}
  \caption{Robustness of \modelname.}
  \label{fig: synthetic robustness}
\end{subfigure} 
\caption{Demonstration on synthetic dataset. (a-c) The region selected by (a) \modelname, (b) $k$-means clustering and (c) linear model tree. The grey shaded area denotes the correct subgroup and the green box corresponds to the learned boundary. For $k$-means clustering, the number of clusters is searched from 2 to 10, and the bounding box for the cluster with smallest MSE is reported in (b). The depth of LMT is searched from 1 to 10, and the best performance is reported in (c).
(d) Robustness of \modelname to core group misspecification. The shaded region shows standard error of the mean over 50 trials. The black dashed line denotes the point at which ``bad" points are included in the core region. The red dashed line denotes the point at which the center of the supplied core set is outside of $R$. The $y$-axis records precision, recall, and F1 score (higher is better).
}
\end{figure*}

Table~\ref{table: sample size} shows the performance of different methods (measured by F1 score) vs. sample size. \modelname has a statistically significant performance improvement compared to the other methods for all sample sizes. LMT eventually identifies most of the correct region, but it is much less sample efficient than \modelname. Increasing the sample size does not appear to help for the clustering method.
\setlength{\tabcolsep}{4.6pt}
\bgroup
\def\arraystretch{1}
\begin{table*}[ht!]
\caption{Performance on single subgroup identification for $k$-means clustering, linear model tree, and \modelname on synthetic datasets of varying sizes. We report the average F1 score plus or minus the standard error of the mean over 20 trials. \modelname outperforms the comparison methods for all sample sizes and finds accurate results even with few samples.
}
\label{table: sample size}
\centering
\begin{tabular}{c|ccccccc}
\toprule  
$n$ & 200 & 400 & 800 & 1600 & 3200 & 6400 & 12800 \\  \midrule
\modelname & \bf{0.73 ± 0.03} & \bf{0.68 ± 0.07} & \bf{0.93 ± 0.02} & \bf{0.98 ± 0.00} & \bf{0.99 ± 0.00} & \bf{0.99 ± 0.00} & \bf{1.00 ± 0.00} \\  \midrule
LMT        & 0.23 ± 0.09 & 0.32 ± 0.10 & 0.19 ± 0.09 & 0.28 ± 0.10 & 0.48 ± 0.11 & 0.92 ± 0.05 & 0.93 ± 0.05 \\  \midrule
Clustering & 0.07 ± 0.04 & 0.18 ± 0.07 & 0.02 ± 0.01 & 0.12 ± 0.05 & 0.12 ± 0.06 & 0.14 ± 0.07 & 0.11 ± 0.06 \\  
\bottomrule
\end{tabular}
\end{table*}
\egroup

\subsection{Evaluation on Real-World Datasets}
We further evaluate our method on five real-world medical related datasets, where linear coefficients were used for interpretation in their original publications.
\begin{enumerate}[leftmargin=0.7cm]
    \item Brazil Health Dataset \citep{cavalcante2018did} is from a longitudinal ecological study for 645 municipalities in the state of São Paulo, Brazil. The study uses a linear model to identify key features for hospitalization of heart failure (HF) and strokes.
    \item China Glucose Dataset \citep{wang2017fasting} consists of 5,726 female (F) and 5,457 male (M) Chinese individuals with normal glucose tolerance. The study uses linear model to describe the relationship between fasting plasma glucose and serum uric acid levels (SUA).
    \item China HIV Dataset \citep{zhang2016stigma} consists of 2,987 participants living with HIV from Guangxi province, China. The study uses linear regression to study how routes of HIV infection affect the HIV  internalized stigma scale, adjusted by patients' characteristics.
    \item Dutch Drinking dataset \citep{boelema2015adolescent} consists of the individual life survey data of alcohol use among 2,230 Dutch adolescents. The study uses linear regression to analyze how drinking affects adolescents' inhibition (inh), working memory (wm) and shift attention (sha).
    \item Korea Grip Dataset \citep{wen2017association} is for the Dong-gu study of 2,251 Korean adults with osteoarthritis (OA). The study uses linear regression to explore the associations between grip strength and individual radiographic feature scores of OA.
\end{enumerate}

\paragraph{Performance Evaluation for a Single Group}  
We first examine the case in which we try to find a single subgroup of the data. Table~\ref{table:real} shows the mean test MSE and fraction of test points included in the selected region (both $\pm$ the standard error of the mean) averaged over 10 random train/validation/test splits. \modelname correctly identifies a subgroup on which the linear model has low test error and consistently outperforms the baseline methods on all five real-world medical datasets. Across all of the datasets, it most frequently has the lowest test MSE, and \emph{never} has a test MSE which was statistically significantly worse than any other method. We demonstrate that there exist subgroups within the real-world population where a linear model is a good proxy and should be used to enhance interpretability. Our current method focuses on finding the most coherent region within the dataset, thus it always identifies  small subgroups with the strongest signal. If a larger subgroup is desired, one may enforce this by selecting the best region which includes e.g. at least a certain fraction of the validation set. In our case, we required that at least 5\% of validation was selected. We also remark that \modelname is computationally efficient in practice. The average runtime for Algorithm~\ref{alg: pipeline} across one run of each dataset was 1.98 seconds on an AMD 7502 CPU, and no individual dataset took longer than 10 seconds.

\paragraph{Performance Evaluation for Multiple Groups}
Next, we examine the performance of the competing methods when selecting multiple subgroups in the data; in this case, we select three subgroups. We modified the \modelname procedure according to Corollary~\ref{thm: multi}. For the other methods, we performed a similar iterative procedure, repeatedly removing the selected subgroups from the training data. Table~\ref{table:multi} shows the same statistics as reported in Table~\ref{table:real}, but averaged across the three selected subgroups (as well as the random train/validation/test splits). When selecting multiple subgroups, \modelname maintains its advantage over the other methods. As in the single group case, it usually has the lowest test MSE and is never statistically significantly worse than any other method.

\paragraph{Case Study} 
Here we use the China HIV Dataset to illustrate how \modelname can enhance our understanding of the data. The original study analyzes how different HIV infection routes affect the internalized stigma by fitting a multivariate linear regression model with confounders \citep{zhang2016stigma}. In their main results, the blood transfusion route is found to 
have positive effect on internalized stigma (coefficient $\beta$ larger than zero), but in low confidence with a large $p$ value. 
In our analysis, we observed similar behavior: after data standardization, the linear model fit to the whole dataset predicts blood transfusion route to have a positive effect on internalized stigma with $\beta$ of 0.12, but low confidence level with a $p$ value of 0.67. On the other hand, \modelname identifies a subgroup of 21\% of the participants where blood transfusion route has the opposite effect on stigma ($\beta=-1.71$) with a strong signal ($p= 0.006$).
The selected subgroup consists of younger participants with
lower self-esteem, lower anxiety level, and less social support. 
The result indicates that while blood transfusion route seems not to associate with internalized stigma in the general population living with HIV, it is coherently associated with lower stigma in a certain subpopulation.
This seems plausible, as the other infection routes include sex with stable partners, sex with casual partners, sex with commercial partners, and injecting drug use. Younger participants may have stronger feelings of shame associated with these activities than older participants.
In general, interpretation of the learned selection rules could be of great interest in real applications.

Before concluding the section, we remark that \modelname offers flexibility in choosing between the size of the selected subgroups and the MSE of the linear model on these subgroups. In these experiments, we required that the selected subgroups contain at least $5\%$ of the validation set in order to be considered. This threshold can easily be modified, and in general a higher threshold will encourage the selection of larger regions at the expense of a higher MSE. See Appendix~\ref{appendix: mse vs size} for more details and experiments regarding this tradeoff.

\setlength{\tabcolsep}{4.6pt}
\bgroup
\def\arraystretch{1}
\begin{table*}[ht!]
\caption{Performance on single subgroup identification for baseline (linear regression model on the whole data), $k$-means clustering, linear model tree, and \modelname on the real-world datasets. Here $d$ denotes the dimension of the features, and subgroup size denotes the fraction of the data included in the selected subgroup. We report the average results ($\pm$ the standard error of the mean) for 10 runs of different random splits.
}
\label{table:real}
\centering
\begin{tabular}{lcc|cccc|ccc}
\toprule
\multirow{2}{*}{Dataset} & \multirow{2}{*}{Task} &
\multirow{2}{*}{$d$} & \multicolumn{4}{c|}{Test MSE} &  \multicolumn{3}{c}{Subgroup Size}  \\  
& &  &  Baseline & Clustering & LMT  & \modelname & Clustering & LMT & \modelname \\  \midrule
 \multirow{2}{*}{\begin{tabular}[c]{@{}c@{}} Brazil \\ Health  \end{tabular}} & HF & 6 & 0.80 ± 0.06 & 0.33 ± 0.03 & 0.21 ± 0.02  &   \bf{0.04 ± 0.00} &  18\% ± 2\% & 13\% ± 1\% &   6\% ± 0\%  \\ 
 & stroke & 6 & 1.14 ± 0.22 &  0.21 ± 0.01 & 0.16 ± 0.01 &  \bf{0.06 ± 0.00} &  20\% ± 2\% & 14\% ± 1\% &   6\% ± 0\% \\  \midrule
 \multirow{2}{*}{\begin{tabular}[c]{@{}c@{}} China \\ Glucose \end{tabular}} & SUA-F & 11 & 0.83 ± 0.02 &  0.69 ± 0.03 & 0.73 ± 0.04 &   0.69 ± 0.06 &  27\% ± 2\% & 24\% ± 6\% &  21\% ± 3\% \\ 
 & SUA-M & 11 & 0.94 ± 0.01 & 0.89 ± 0.04 & \bf{0.80 ± 0.0}2 &  0.81 ± 0.04 &  21\% ± 5\% & 15\% ± 1\% &   8\% ± 1\%  \\  \midrule
 \multicolumn{2}{l}{China HIV \ stigma} & 27 & 0.84 ± 0.01 &  0.86 ± 0.08 & 0.83 ± 0.08 &  \bf{0.69 ± 0.04} &  6\% ± 1\% & 18\% ± 4\% &  21\% ± 3\% \\ \midrule
\multirow{3}{*}{\begin{tabular}[c]{@{}c@{}} Dutch \\ Drinking  \end{tabular}} & inh & 16 & 0.64 ± 0.01 & 0.56 ± 0.02 & 0.51 ± 0.03 & \bf{0.50 ± 0.02} &  11\% ± 1\% & 24\% ± 5\% &  11\% ± 2\%   \\ 
 & wm & 16 & 0.71 ± 0.01 & 0.61 ± 0.02 &  \bf{0.56 ± 0.02} &  0.57 ± 0.02 &  11\% ± 1\% & 18\% ± 3\% &   9\% ± 1\% \\  
 & sha & 16 & 0.64 ± 0.01 & 0.49 ± 0.02 & 0.47 ± 0.02 &  \bf{0.42 ± 0.02} &  14\% ± 2\% & 18\% ± 4\% &  10\% ± 1\% \\  \midrule
 \multicolumn{2}{l}{Korea Grip \ strength}  & 11 & 0.71 ± 0.02 & 0.84 ± 0.13 & 0.92 ± 0.10 & \bf{0.69 ± 0.04} & 7\% ± 1\% & 33\% ± 6\% &  20\% ± 3\% \\  
\bottomrule
\end{tabular}
\end{table*}
\egroup

\bgroup
\def\arraystretch{1}
\begin{table*}[ht!]
\caption{Performance on multiple subgroups identification for baseline (linear regression model on the whole data), $k$-means clustering, linear model tree, and \modelname on the real-world datasets. Here we select \textbf{three} subgroups (rather than a single subgroup as in Table~\ref{table:real}) and report the average results for the selected groups. Here $d$ denotes the dimension of the features, and subgroup size denotes the fraction of the data included in the selected subgroups. We report the average results for 10 runs of different random splits ($\pm$ the standard error of the mean).
}
\label{table:multi}
\centering
\begin{tabular}{lcc|cccc|ccc}
\toprule
\multirow{2}{*}{Dataset} & \multirow{2}{*}{Task} &
\multirow{2}{*}{$d$} & \multicolumn{4}{c|}{Test MSE} &  \multicolumn{3}{c}{Subgroup Size}  \\  
& &  &  Baseline & Clustering & LMT  & \modelname & Clustering & LMT & \modelname \\  \midrule
 \multirow{2}{*}{\begin{tabular}[c]{@{}c@{}} Brazil \\ Health  \end{tabular}} & HF & 6 & 0.80 ± 0.06 &  0.42 ± 0.02 & 0.35 ± 0.01  & \bf{0.21 ± 0.03} &  19\% ± 1\% & 14\% ± 0\% & 7\% ± 1\% \\ 
 & stroke & 6 & 1.14 ± 0.22 &  0.27 ± 0.00 & 0.26 ± 0.01  & \bf{0.15 ± 0.01} &  23\% ± 2\% & 15\% ± 1\% & 6\% ± 1\% \\  \midrule
 \multirow{2}{*}{\begin{tabular}[c]{@{}c@{}} China \\ Glucose \end{tabular}} & SUA-F  & 11 & 0.83 ± 0.02  &  0.75 ± 0.06 & 0.82 ± 0.03  & \bf{0.72 ± 0.03} &  29\% ± 3\% & 16\% ± 1\% & 14\% ± 2\% \\ 
 & SUA-M & 11 & 0.94 ± 0.01 &  0.92 ± 0.02 & 0.88 ± 0.02  & 0.88 ± 0.03 &  15\% ± 1\% & 16\% ± 1\% & 14\% ± 4\% \\  \midrule
 \multicolumn{2}{l}{China HIV \ stigma} & 27 & 0.84 ± 0.01  & 0.96 ± 0.04 & 0.91 ± 0.04  & \bf{0.80 ± 0.04} &  38\% ± 6\% & 16\% ± 2\% & 20\% ± 2\% \\ \midrule
\multirow{3}{*}{\begin{tabular}[c]{@{}c@{}} Dutch \\ Drinking  \end{tabular}} & inh & 16 & 0.64 ± 0.01  & 0.56 ± 0.02 & 0.55 ± 0.01 & \bf{0.49 ± 0.02} &  12\% ± 1\% & 14\% ± 1\% & 10\% ± 1\%  \\ 
 & wm & 16 & 0.71 ± 0.01 & 0.64 ± 0.01 & \bf{0.58 ± 0.01} & 0.59 ± 0.01 &  13\% ± 1\% & 13\% ± 0\% & 13\% ± 3\% \\  
 & sha & 16 & 0.64 ± 0.01 & 0.52 ± 0.01 & 0.51 ± 0.01 & \bf{0.47 ± 0.01} &  12\% ± 1\% & 14\% ± 1\% & 11\% ± 1\% \\  \midrule
 \multicolumn{2}{l}{Korea Grip \ strength}  & 11& 0.71 ± 0.02 & 0.99 ± 0.17 & 0.86 ± 0.05 & \bf{0.70 ± 0.07} &  10\% ± 3\% & 23\% ± 2\% & 23\% ± 4\% \\  
\bottomrule
\end{tabular}
\end{table*}
\egroup

\section{Discussion}
In this paper, we considered a flexible formalization of the cohort selection problem. We proposed a general algorithmic framework and a specific instantiation, \modelname, for solving the problem, and we proved that \modelname recovers the correct subgroup given sufficient data. Experiments on both synthetic and real data verify our theory and show the practical usefulness of \modelname.

\subsection{Limitations \& Future Work}
While the assumption that there is a region in which the linear model holds exactly may seem strong at first glance, if the true regression function for the data is differentiable, then a linear model will always hold locally. Thus, if it is acceptable to select a small group, \modelname can still succeed in nonlinear cases. However, if the true regression function is highly oscillatory, these locally linear regions may be very small, and a large amount of data will be required to find them. Another limitation may arise in situations where there is not a unique ``best'' region (or collection of best regions), i.e., when $\var(y|x)$ is roughly the same across the whole data space. In such cases, the regions discovered by \modelname may be unstable across different random splits of the data, as there is not a strong reason for \modelname to prefer one region of the data over another.

There are a number of important open questions which remain to be addressed. If a hyperparameter search is used with \modelname to train a linear model (as we did with our real data experiments), further analysis is needed to give meaningful (but valid) $p$-values for the resulting model coefficients. For any extensive hyperparameter search, a naive Bonferroni correction is likely to be too conservative. Another important question is how to extend our framework to classification and survival analysis data.

\subsection{Societal Impact}
In particular if this method is used for medical applications, safety concerns must always be paramount. Even if we control some notion of the false discovery rate, it is conceivable that the method will discover a region with a favorable relationship between the covariates and labels that holds only by chance, and if such a region is used to make clinical decisions, it could lead to adverse outcomes for patients. Thus biological plausibility and medical best practices must always be kept in mind when applying \modelname.

\section*{Acknowledgements}
We thank all of the anonymous reviewers for their helpful comments and feedback.
J.Z. is supported by NSF CAREER 1942926 and grants from the Silicon Valley Foundation and the Chan–Zuckerberg Initiative.
Z.I. is supported by a Stanford Interdisciplinary Graduate Fellowship.

\newpage
\bibliographystyle{plainnat}
\bibliography{aistats/paper}

\newpage
\onecolumn
\appendix

\section{Geometric Intuition for the Directed Infinity Norm} \label{appendix: directed infty norm}
Here we provide a concrete example of the use of the directed infinity norm in Algorithm~\ref{alg: growing box}. For simplicity, take $U = \{\pm e_i\}_{i=1}^d$, where $e_i$ are the standard basis vectors for $\R^d$. For any vector $x$, we have $\|x\|_{U, \infty} = \max \{x[i], -x[i]\}_{i=1}^d$, where $x[i]$ is the $i$-th component of $x$. Thus we see that $\|x\|_{U,\infty} = \|x\|_\infty$ coincides with the standard $\ell_\infty$ norm in this case. We therefore select the point $x^*$ with the smallest $\ell_\infty$ norm first. If we think of expanding an $\ell_\infty$ ball starting from the origin, $x^*$ is the first point the surface of the ball will come into contact with as it expands.

Suppose WLOG that $u^* = e_1$. This means that the side of the expanding $\ell_\infty$ ball with outward normal in the $e_1$ direction is the side which ``contacted'' $x^*$. Thus any points with $e_1^\T x > e_1^\T x^*$ lie past this face of the expanding box, and therefore cannot possibly constrain any of the other sides of the box. Thus, we no longer need to consider such points.

Similarly, the side of the box which was expanding in the $e_1$ direction is no longer moving outwards. Of the remaining directions of expansion, our goal is to find the next point that a side will come into contact with. By the same logic as before, with $U = \{-e_1\}\cup \{\pm e_i\}_{i=2}^d$, the $\argmax_{u\in U} u^\T x$ tells us which of the \emph{remaining} faces $x$ lies above. Therefore, $\argmin \|x\|_{U,\infty}$ is the point which supports one of the remaining growing faces closest to the origin, i.e., the next point of contact for the expanding box.

\section{MSE vs. Subgroup Size} \label{appendix: mse vs size}
As mentioned in Section~\ref{sec: setup}, \modelname offers a flexible tradeoff between subgroup size and MSE. To implement this tradeoff, we can simply require that the selected region contains at least a proportion $p$ of the validation set. We then select the region with the lowest validation MSE among those regions satisfying this requirement. By varying $p$ between $0$ and $1$, we can smoothly trade off between the size of the selected subgroup (larger $p$) and the MSE on the selected subgroup (smaller $p$).

Figure~\ref{fig: mse vs size} shows the results of this procedure. The $x$-axis shows the fraction of test points included in the selected region, and the $y$-axis shows the test MSE of the model in that region (normalized by the MSE of the baseline model fit to the entire dataset; lower is better). We generated these plots by choosing $p \in \{0.05, \: 0.1, \: 0.2, \: 0.3, \: \ldots, \: 0.9, 1.0\}$ and repeating the experiment across 10 random train/validation/test splits for each dataset. As expected, there is a general positive correlation between the size of the selected group ($x$-axis) and the MSE. 
\begin{figure}
    \centering
    \includegraphics[width=\linewidth]{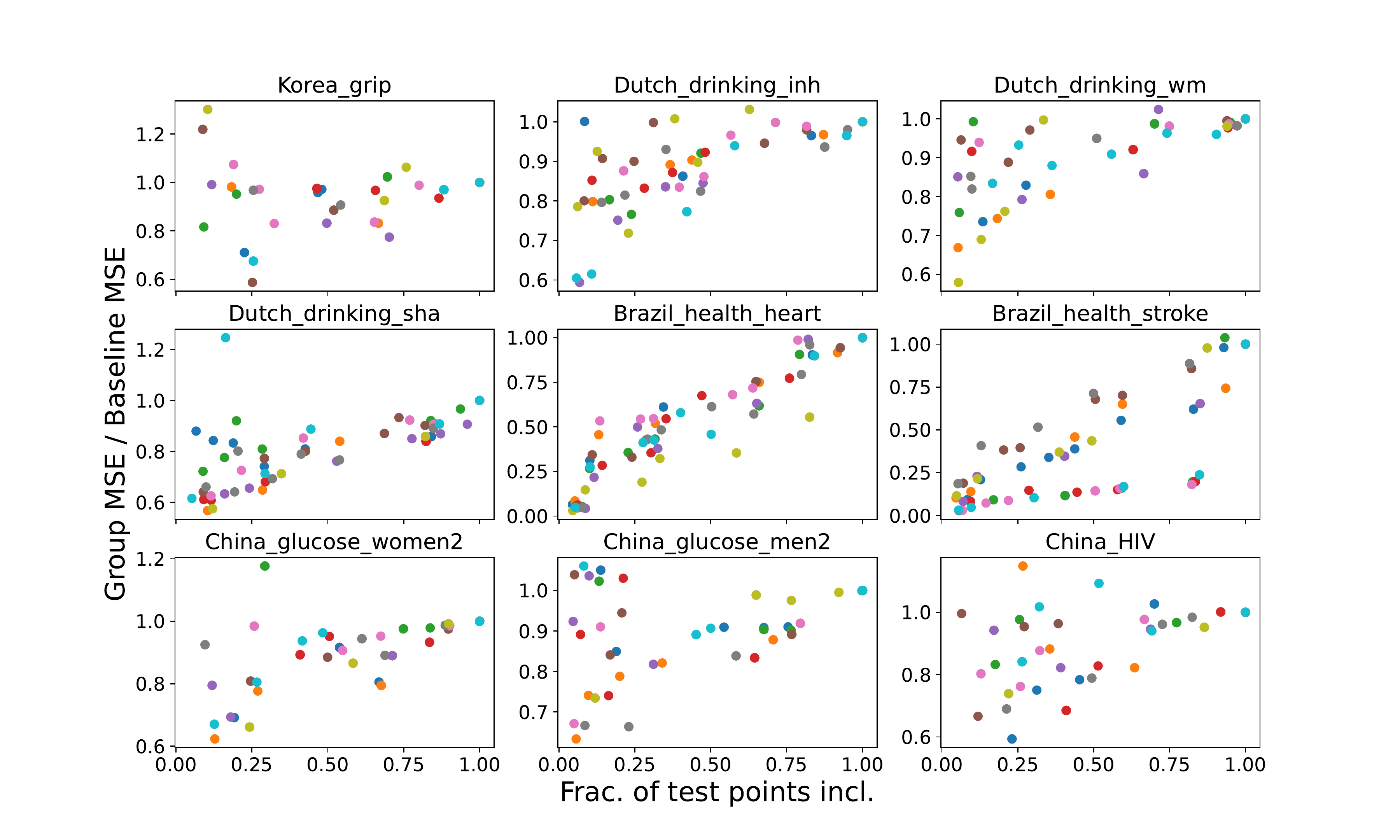}
    \caption{MSE vs. subgroup size selected by \modelname. The $x$-axis shows the fraction of test points included in the selected region. The $y$-axis shows the MSE of the model on the test points in the selected region, normalized by the test MSE of the base model on the whole dataset. (Lower is better.) Different colored points correspond to different random training/validation/test splits on the same dataset. There are 10 random splits in total for each dataset.}
    \label{fig: mse vs size}
\end{figure}

\section{More Experiment Details}
For the experiment in Table~\ref{table: sample size}, we set $R^* = [-1/3, 1/3]^2$ and the bounding region $B = [-1, 1]^2$. We also set $\s_{\textrm{in}} = 0.3$ and $\s_{\textrm{out}} = 5.0$. For \modelname, we did a hyperparameter search over constant rejection thresholds $\rho \in \{2, 4, 8, 16, 32, 64\}$. The core group size was always chosen to be $k=n/20$. Rather than using the variable box growing speeds, we instead added a ``shrinkage'' hyperparameter $\d$: in Algorithm~\ref{alg: growing box}, we only consider points $x \in X_{\textrm{rej}}$ where $x^\T u^* < a^* - \d$. Geometrically, after the growing box ``collides'' with a rejected point, we ``shrink'' that side back by $\d$ opposite to its normal vector. We did a hyperparameter search over $\d \in \{0.1, 0.05, 0.025, 0.01\}$.

For each experiment, we used $20\%$ of the $n$ points as a validation set to select the hyperparameters. 
For \modelname, to select the hyperparameters using the validation set, we used the following procedure. Let $\hat{R}$ be the region selected by a particular setting of the hyperparameters. Let $\hat{\s}$ be an estimate of $\s$ from the core group: 
$$\hat{\s}^2 = \frac{1}{k-d} \sum_{\substack{(x,y) \\ x \in X_{\textrm{core}}}} (\hat{\b}^\T x - y)^2.$$
Let $\hat{q}$ be the $0.9$-quantile of the absolute residuals on the validation set:
$$\hat{q} = \inf \l\{q \: : \: \frac{1}{k} \sum_{\substack{(x,y) \\ x \in X_{\textrm{core}}}} \I\{|\hat{\b}^\T x - y| \leq q\} \geq 0.9\r\}. $$
We selected the hyperparameters which produced the largest region $\hat{R}$ (measured in terms of volume) for which $\hat{q} \leq 3\hat{\s}$.

Lastly, for the LMT method on this experiment, we tuned the tree depth from 1 to twice the dimension.

Code for the experiments can be found at \href{https://github.com/zleizzo/DDGroup}{\texttt{https://github.com/zleizzo/DDGroup}}.

\section{Omitted Proofs} \label{appendix: proofs}
Let $n$ be the total number of points, $k$ the size of the core group.
In what follows, $\|\cdot\|$ denotes the $\ell_2$ norm. We will also sometimes use $x[i]$ to refer to the $i$-th component of the vector $x$.

$X$ will be used to denote the design matrix of a particular group of points (usually a group of $k$ nearest neighbors as considered by the first phase of the algorithm), and $Y$ will denote the  vector of labels corresponding to $X$. We also use the notation $X\cap R^*$; this refers to the matrix of rows of $X$ which are contained in $R^*$.

We use the notation $\cB(x, r)$ to denote the $\ell_2$ ball of radius $r$ centered at $x$. Finally, ``with high probability'' means with probability approaching 1 as $n \rightarrow \infty$ or $k\rightarrow\infty$.

We remark briefly that we select the KNN neighborhood of a point based on the Euclidean norm. This may seem to be a geometric mismatch with the region $R^*$; since $R^*$ is an axis-aligned box, an $\ell_\infty$ neighborhood (which is also an axis-aligned box) might seem more appropriate. Since the $\ell_2$ and $\ell_\infty$ norms are equivalent, this distinction will not make any difference for our theoretical results. Empirically, we also do not notice much change in performance. Thus, we will use the more familiar $\ell_2$ norm for selecting the core group.

\begin{lem} \label{thm: chebyshev}
Let $Z_i$ be independent random variables with uniformly bounded fourth moments. Then $$\P\l(\l|\frac1n \sum_{i=1}^n Z_i - \E Z_i\r| \geq n^{-1/8}\r) = O(n^{-3/2}). $$
\end{lem}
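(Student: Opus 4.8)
The plan is to prove a concentration inequality for the empirical mean of independent (not necessarily identically distributed) random variables with uniformly bounded fourth moments, using a fourth-moment (Chebyshev-type) argument. The key observation is that the deviation threshold $n^{-1/8}$ is chosen precisely so that applying Markov's inequality to the \emph{fourth} power of the centered sum yields the stated $O(n^{-3/2})$ rate.

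\medskip

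\noindent\textbf{Main steps.} Let $W_i = Z_i - \E Z_i$, so the $W_i$ are independent, mean-zero, with $\E W_i^4$ uniformly bounded by some constant $M$ (uniform boundedness of fourth moments is inherited after centering, up to a constant factor). First I would apply Markov's inequality to the fourth power:
\begin{equation}
\P\l(\l|\frac1n \sum_{i=1}^n W_i\r| \geq n^{-1/8}\r) = \P\l(\l(\sum_{i=1}^n W_i\r)^4 \geq n^{4}\cdot n^{-1/2}\r) \leq \frac{\E\l[\l(\sum_{i=1}^n W_i\r)^4\r]}{n^{7/2}}. \nonumber
\end{equation}
The crux is then to bound the fourth moment of the sum. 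Expanding $\l(\sum_i W_i\r)^4$ and using independence together with $\E W_i = 0$, every cross term containing a lone factor $W_i$ (to the first power) vanishes. The only surviving terms are those of the form $\E W_i^4$ (there are $n$ of them) and those of the form $\E W_i^2 \E W_j^2$ with $i \neq j$ (there are $O(n^2)$ of them, with a combinatorial constant of $3$ from the multinomial coefficient $\binom{4}{2,2}$). Using $\E W_i^2 \leq \sqrt{\E W_i^4} \leq \sqrt{M}$ by Jensen/Cauchy--Schwarz, both families of terms are controlled by constants, giving
\begin{equation}
\E\l[\l(\sum_{i=1}^n W_i\r)^4\r] \leq n M + 3n(n-1) M = O(n^2). \nonumber
\end{equation}

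\medskip

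\noindent\textbf{Conclusion.} Substituting this $O(n^2)$ bound into the Markov estimate yields $\P(\cdots) \leq O(n^2)/n^{7/2} = O(n^{-3/2})$, which is exactly the claim. The argument is clean because the $n^{-1/8}$ scaling makes $n^{-1/8}$ raised to the fourth power equal to $n^{-1/2}$, so that dividing the $O(n^2)$ moment bound by $n^4 \cdot n^{-1/2} = n^{7/2}$ lands precisely at $n^{-3/2}$.

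\medskip

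\noindent\textbf{Main obstacle.} The only genuinely careful step is the expansion of the fourth moment of the sum and the bookkeeping of which mixed terms survive under independence and mean-zero assumptions. This is entirely routine but must be done correctly: one must verify that terms like $\E[W_i^3 W_j]$ and $\E[W_i^2 W_j W_\ell]$ vanish (each contains an isolated first-power factor whose expectation is zero by independence), leaving only the $W_i^4$ and $W_i^2 W_j^2$ contributions. I expect no conceptual difficulty beyond this combinatorial accounting, and the uniform fourth-moment bound is exactly what is needed to make the constants uniform in $i$ so that the final $O(\cdot)$ is legitimate.
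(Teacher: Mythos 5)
Your proof is correct and follows essentially the same route as the paper's: Markov's inequality applied to the fourth power of the centered sum, followed by the observation that independence and mean-zero centering leave only the $O(n^2)$ terms of the form $\E W_i^4$ and $\E W_i^2\,\E W_j^2$, giving $O(n^2)/n^{7/2} = O(n^{-3/2})$. Your bookkeeping (explicit multinomial coefficient and the Cauchy--Schwarz bound $\E W_i^2 \leq \sqrt{\E W_i^4}$) is if anything slightly more careful than the paper's.
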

\begin{proof}
This is just a generalization of the standard Chebyshev inequality, and the proof proceeds in the same way. By Markov's inequality, we have
$$ \P\l(\l|\frac1n \sum_{i=1}^n Z_i - \E Z_i\r| \geq t\r)
= \P\l(\l(\sum_{i=1}^n Z_i - \E Z_i\r)^4 \geq n^4 t^4\r)
\leq \frac{\E[(\sum_{i=1}^n Z_i - \E Z_i)^4]}{n^4 t^4}. $$
Expanding $(\sum_{i=1}^n Z_i - \E Z_i)^4$ and taking expectation, by linearity of expectation and independence of the $Z_i$, the only terms which do not vanish are of the form $(Z_i - \E Z_i)^4$ and $(Z_i - \E Z_i)^2 (Z_j - \E Z_j)^2$. There are $O(n^2)$ of all of these terms, each with expectation bounded by $O(1)$, so we obtain
$$ \P\l(\l|\frac1n \sum_{i=1}^n Z_i - \E Z_i\r| \geq t\r) = O\l(\frac{1}{n^2 t^4}\r). $$
Substituting $t = n^{-1/8}$ completes the proof.
\end{proof}

\begin{lem} \label{thm: knn in Rstar}
There exists a constant $c_1 > 0$ (which can depend on $R^*, c, C, R$) such that if $k\leq c_1n$, there exists a set of $k$ nearest neighbors of some point in $X$ which is contained in $R^*$ with high probability.
\end{lem}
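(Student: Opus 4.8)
The plan is to exhibit a small Euclidean ball sitting strictly inside $R^*$ and argue that, with high probability, it captures enough sample points that any one of them has all of its $k$ nearest neighbors inside $R^*$. By Assumption~\ref{assumption: nonempty}, $R^*$ is an axis-aligned box with nonempty interior, so I would fix its center $x_c$ and a radius $R_0 > 0$ with $\cB(x_c, R_0) \subseteq R^*$. The key geometric observation is that any two points lying in the smaller ball $\cB(x_c, R_0/3)$ are within Euclidean distance $2R_0/3$ of each other; hence if some sample point $x_i \in \cB(x_c, R_0/3)$ has at least $k$ other sample points in $\cB(x_c, R_0/3)$, then its $k$-th nearest-neighbor distance is at most $2R_0/3$, and all $k$ of its nearest neighbors lie in $\cB(x_i, 2R_0/3) \subseteq \cB(x_c, R_0) \subseteq R^*$.

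Thus it suffices to guarantee that $\cB(x_c, R_0/3)$ contains at least $k+1$ sample points with high probability. Let $N = \sum_{i=1}^n \I\{x_i \in \cB(x_c, R_0/3)\}$. Since $\cB(x_c, R_0/3) \subseteq R^* \subseteq S$, Assumption~\ref{assumption: density} gives $p := \P(x \in \cB(x_c, R_0/3)) \geq c_f\, \vol(\cB(0, R_0/3)) =: p_0 > 0$, a positive constant independent of $n$. The indicators $\I\{x_i \in \cB(x_c, R_0/3)\}$ are i.i.d., bounded, and hence have uniformly bounded fourth moments, so Lemma~\ref{thm: chebyshev} applies directly: with probability $1 - O(n^{-3/2})$ we have $N/n \geq p - n^{-1/8} \geq p_0/2$ once $n$ is large enough.

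Finally I would set $c_1 = p_0/4$. Then whenever $k \leq c_1 n$, on the high-probability event above we have $N \geq n p_0/2 = 2 c_1 n \geq 2k > k$, so $\cB(x_c, R_0/3)$ contains at least $k+1$ points; picking any one of them as $x_i$ and invoking the geometric observation completes the argument. Note that $R_0$ depends only on $R^*$ (and the dimension), while $p_0$ depends on $c_f$ and $R_0$, so the resulting $c_1$ depends only on $R^*$ and the density lower bound, consistent with the claimed dependence.

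I do not expect any step to be a serious obstacle: the only quantitative input is the concentration of the binomial-type count $N$, which is immediate from Lemma~\ref{thm: chebyshev} (a Chernoff bound would also do) precisely because the density lower bound makes $p$ bounded away from zero uniformly in $n$. The one point requiring a little care is the geometric bookkeeping of radii—choosing the inner radius $R_0/3$ so that the $k$-nearest-neighbor ball of an interior point remains within $\cB(x_c, R_0)$ and hence within $R^*$—and verifying that the count $N$ is defined before $x_i$ is selected, so no conditioning subtlety arises.
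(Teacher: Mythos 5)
Your proof is correct and follows essentially the same route as the paper's: both place a small Euclidean ball inside $R^*$, use concentration (you via Lemma~\ref{thm: chebyshev}, the paper via Hoeffding) to show it contains at least $k$ sample points when $k \leq c_1 n$, and then use a triangle-inequality radius argument to conclude the $k$ nearest neighbors of an interior point stay within $R^*$. Your single-ball bookkeeping (radius $R_0/3$) is a slightly cleaner packaging of the paper's two concentric balls of radii $r/4$ and $r/2$, but the substance is identical.
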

\begin{proof}
By Assumption~\ref{assumption: nonempty}, $R^*$ has nonempty interior. Thus there exists a point $\bar{x} \in R^*$ and a radius $r>0$ such that $\cB(\bar{x}, r) \subseteq R^*$. Consider $\cB(\bar{x}, r/4) \subseteq R^*$. By Assumption~\ref{assumption: density}, $\P(x \in \cB(\bar{x}, r/4)) \geq c_f \cdot \vol(\cB(\bar{x}, r/4)) \equiv p_1 = \Omega(1)$. By Hoeffding's inequality, we have
$$ \P\l(\sum_{i=1}^n \I\{x_i \in \cB(\bar{x}, r/4)\} \leq p_1n - t\r) \leq e^{-2t^2 / n} \hspace{.15in} \Longrightarrow \hspace{.15in} \sum_{i=1}^n \I\{x_i \in \cB(\bar{x}, r/4)\} \geq p_1n - \sqrt{\frac{n \log n}{2}}$$
with probability at least $1-1/n$. In particular, since $p_1 = \Omega(1)$, for $n$ large enough we have that $\cB(\bar{x}, r/4)$ contains at least \emph{one} point $x_{i^*}$ with high probability.

Next, consider $\cB(\bar{x}, r/2)$. Setting $p_2 = c_f \cdot \vol(\cB(\bar{x}, r/2))$, the same argument as above shows that
$$ \sum_{i=1}^n \I\{x_i \in \cB(\bar{x}, r/2) \geq p_2n - \sqrt{\frac{n\log n}{2}} \hspace{.25in} \textrm{w.p.} \geq 1-1/n. $$
In particular, if we take $c_1 = p_2/2 = \Omega(1)$, then $k \leq c_1n$ implies that there are at least $k$ points contained in $\cB(\bar{x}, r/2)$ with high probability for $n$ large enough. We claim that in this event, the KNN of $x_{i^*}$ is contained in $\cB(\bar{x}, r)\subseteq R^*$. This is because for $x' \not\in \cB(\bar{x}, r)$, we have
$$ \|x' - x_{i^*}\| \geq \|x' - \bar{x}\| - \|\bar{x} - x_{i^*}\| > r - r/4 = 3r/4.$$
However, for a point $x' \in \cB(\bar{x}, r/2)$, we have
$$ \|x' - x_{i^*}\| \leq \|x' - \bar{x}\| + \|\bar{x} - x_{i^*}\| \leq r/2 + r/4 = 3r/4.$$
Thus with probability approaching 1 as $n\rightarrow\infty$, there exists a set of $k$ nearest neighbors contained in $R^*$ provided that $k \leq c_1n$.
\end{proof}
Henceforth, we will assume that $k \leq c_1n$.

\begin{lem} \label{thm: slabs}
Let $u$ be any unit vector and define $S_{u,r} = \{x \in X \: : \: |x^\T u|\leq r\}$. With probability at least $1-1/n$, we have $|S_{u,r}| \leq c_2rn + \sqrt{\frac{n\log n}{2}}$ for some constant $c_2$.
\end{lem}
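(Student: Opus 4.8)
The plan is to recognize $|S_{u,r}|$ as a sum of i.i.d.\ indicator variables and apply a one-sided concentration bound exactly as in the proof of Lemma~\ref{thm: knn in Rstar}. Writing $|S_{u,r}| = \sum_{i=1}^n \I\{|x_i^\T u| \leq r\}$, each summand is a Bernoulli variable with success probability $p = \P(|x^\T u| \leq r)$, so the entire statement reduces to (i) producing a good upper bound on $p$ that is uniform over the choice of unit vector $u$, and (ii) invoking Hoeffding's inequality with the deviation $t = \sqrt{n\log n / 2}$ so that the tail probability is exactly $1/n$.

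For step (i), I would bound the single-point probability by the density times the relevant volume. By Assumption~\ref{assumption: density}, the marginal density satisfies $f(x) \leq C_f$, and by Assumption~\ref{assumption: bdd x} the support is contained in $\cB(0, B)$. Hence
$$ \P(|x^\T u| \leq r) \leq C_f \cdot \vol\big(\{x \in \cB(0,B) : |x^\T u| \leq r\}\big). $$
The set on the right is the intersection of the ball of radius $B$ with a slab of half-width $r$ about the hyperplane $u^\perp$. Its volume is at most $2r$ times the area of the largest perpendicular cross-section, i.e.\ the $(d-1)$-dimensional ball of radius $B$, giving $\vol(\cdots) \leq 2r\,\omega_{d-1} B^{d-1}$, where $\omega_{d-1}$ is the volume of the unit $(d-1)$-ball. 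Setting $c_2 = 2 C_f\, \omega_{d-1} B^{d-1}$ yields $p \leq c_2 r$. Crucially, because $\cB(0,B)$ is rotationally symmetric, this bound does not depend on $u$, so $c_2$ is a single constant (recall $d$ is treated as fixed), which is what the statement requires.

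For step (ii), since the indicators are i.i.d.\ and bounded in $[0,1]$ with mean $p \leq c_2 r$, Hoeffding's inequality gives
$$ \P\!\left(\sum_{i=1}^n \I\{|x_i^\T u| \leq r\} \geq np + t\right) \leq e^{-2t^2/n}. $$
Taking $t = \sqrt{n\log n/2}$ makes the right-hand side equal to $e^{-\log n} = 1/n$, so with probability at least $1 - 1/n$ we have $|S_{u,r}| \leq np + \sqrt{n\log n/2} \leq c_2 r n + \sqrt{n\log n/2}$, as claimed.

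I do not expect any genuine obstacle here; this is a routine covering/concentration estimate. The only point requiring a moment of care is ensuring the constant $c_2$ is \emph{uniform} over the direction $u$ — this is exactly why I bound the slab volume against the enclosing ball rather than against the (possibly irregular) support $S$, exploiting rotational symmetry. A secondary subtlety is that the lemma is stated for a \emph{fixed} $u$ (the bound holds with probability $1-1/n$ for each given direction), so no union bound over directions is needed; the fixed-$u$ statement suffices for how this lemma will be used downstream.
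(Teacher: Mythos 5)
Your proof is correct and follows essentially the same route as the paper's: bound $\P(|x^\T u| \leq r)$ by the density bound times the volume of the slab intersected with the bounded support, then apply Hoeffding's inequality with $t = \sqrt{n\log n/2}$. Your version just makes the volume estimate and the uniformity of $c_2$ over $u$ more explicit than the paper does.
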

\begin{proof}
Since $S$ is bounded, we have that $\vol(S_{u,r} \cap S) \leq c_2' r$ for some constant $c_2'$ (which depends on the size of $S$). By Assumption~\ref{assumption: density}, since the density of $x$ is bounded, this means that $\P(x \in S_{u,r}) \leq c_2 r$ for some constant $c_2$. Thus by Hoeffding's inequality, we have that
$$ |\{x \: : \: |u^\T x| < r\}| \leq c_2nr + \sqrt{\frac{n\log n}{2}} \hspace{.25in} \textrm{w.p.} \geq 1-1/n$$
as desired.
\end{proof}

\begin{lem} \label{thm: lambda min}
If $k = \Omega(n)$, then with high probability,
\emph{every} group of $k$ points $X$ has $\smin(\frac1k X^\T X) = \Omega(1)$.
\end{lem}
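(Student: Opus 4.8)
The plan is to combine the variational characterization of the smallest eigenvalue with the slab bound of Lemma~\ref{thm: slabs} and a covering argument over directions. Since $\frac1k X^\T X = \frac1k \sum_{i\in I} x_i x_i^\T$ (where $I$ is the size-$k$ index set of the group) is symmetric and positive semidefinite, its smallest singular value equals its smallest eigenvalue, so $\smin(\frac1k X^\T X) = \min_{\|v\|=1} v^\T (\frac1k X^\T X) v = \min_{\|v\|=1} \frac1k \sum_{i\in I} (x_i^\T v)^2$. It therefore suffices to lower bound $\frac1k\sum_{i\in I}(x_i^\T v)^2$ by a constant, uniformly over unit vectors $v$ and over all size-$k$ subsets $I$.

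First I would fix a direction $v$. The only way to make $\frac1k\sum_{i\in I}(x_i^\T v)^2$ small is to select points lying in a thin slab $\{|x^\T v| \le r\}$ around the hyperplane $v^\perp$. By Lemma~\ref{thm: slabs}, with probability at least $1-1/n$ the number of sample points in this slab is at most $c_2 rn + \sqrt{n\log n/2}$. Writing $k \ge \alpha n$ (the meaning of $k = \Omega(n)$) and choosing the constant $r = \alpha/(4c_2)$, this count is at most $\alpha n/4 + o(n) \le k/2$ for $n$ large. Crucially, this controls the slab occupancy of the \emph{entire} sample, so it holds simultaneously for every subset: any group of $k$ points contains at most $k/2$ slab points, hence at least $k/2$ of its points satisfy $(x_i^\T v)^2 > r^2$, yielding $\frac1k\sum_{i\in I}(x_i^\T v)^2 \ge r^2/2$ for all $I$ at once.

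The remaining issue is uniformity over the continuum of directions $v$, which I would handle with an $\e$-net. Since $d$ is constant, the unit sphere admits a net $\cN$ of constant size $O(\e^{-(d-1)})$, and a union bound over $\cN$ of the (constantly many) slab events above succeeds with probability $1 - O(\e^{-(d-1)})/n \to 1$. For the discretization I would use the deterministic operator-norm bound $\|\frac1k X^\T X\| \le \frac1k\sum_{i\in I}\|x_i\|^2 \le B^2$ (from Assumption~\ref{assumption: bdd x}) together with the standard covering inequality $\smin(A) \ge \min_{v\in\cN} v^\T A v - 2\e\|A\|$, valid for PSD $A$. Taking $\e = r^2/(8B^2)$ then gives $\smin(\frac1k X^\T X) \ge r^2/2 - r^2/4 = r^2/4 = \Omega(1)$, simultaneously for every size-$k$ group.

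The main obstacle, and the point I would be most careful about, is the uniformity over the exponentially many subsets $I$: a naive union bound over the $\binom nk$ groups would be hopeless. The trick that makes the argument go through is that the slab bound constrains the whole sample at once, so the per-direction estimate is automatically uniform in $I$; only the low-dimensional dependence on $v$ actually requires covering, and there the constant dimension keeps the net—and hence the union bound—of constant size.
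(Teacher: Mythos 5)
Your proposal is correct and follows essentially the same route as the paper's proof: the slab bound of Lemma~\ref{thm: slabs} applied to the whole sample (which is what makes the estimate uniform over all size-$k$ groups without a union bound over subsets), followed by an $\e$-net over directions with the discretization error controlled by $\|A\|\leq B^2$. The only differences are in the choice of constants.
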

\begin{proof}
First, consider a fixed $\|u\|=1$. For any group of $k$ points, let $X$ be the associated data matrix and define $A = \frac1k X^\T X$. By Lemma~\ref{thm: slabs}, we have
\begin{align*}
    u^\T A u &= \frac1k \sum_{i=1}^k (x_i^\T u)^2 \\[5pt]
    &\geq \frac1k \sum_{i \: : \: |x_i^\T u| \geq r} (x_i^\T u)^2 \\[5pt]
    &\geq \frac{1}{k} \l(k - c_2rn - \sqrt{\frac{n\log n}{2}}\r) r^2.
\end{align*}
Let $c_3 > 0$ be a constant such that $k \geq c_3 n$ and define $r = c_3/2c_2$. We obtain the lower bound
$$ u^\T A u \geq \l(1 - \frac{c_2rn}{c_3n} - \frac{\sqrt{\frac{n\log n}{2}}}{c_3n}\r) \frac{c_3^2}{4c_2^2} = \Omega(1) $$
for $n$ large enough. Let $c_4 = \Omega(1)$ be a lower bound on this quantity for large $n$.

Observe that by the fact that the $x_i$ are bounded, we trivially have $\|A\| \leq B^2:$
$$ u^\T A u = \frac1k \sum_{i=1}^k (u_i^\T x)^2 \leq \frac1k \sum_{i=1}^k \|x_i\|^2 \leq B^2. $$

Next, let $E = \{u_i\}_{i=1}^N$ be an $\e$-net for the unit sphere, and note that we can take $N \leq (3/\e)^d$. By applying a union bound over $E$, we have that $u_i^\T (\frac1k X^\T X) u_i \geq c_4$ for all $i$ with probability at least $1 - N/n$. Let $\|u\|=1$ be arbitrary and choose $u_i$ such that $\|u-u_i\|\leq \e$. Then we have
\begin{align*}
    u^\T A u &= u_i^\T A u_i + (u-u_i)^\T A u + u_i^\T A (u-u_i) \\[5pt]
    &\geq c_4 - \|A\|\|u-u_i\|\|u\| - \|A\|\|u_i\|\|u-u_i\| \\[5pt]
    &\geq c_4 - 2B^2\e.
\end{align*}
Thus if we take $\e = c_4/4B^2$, we have that $u^\T A u \geq c_4/2 = \Omega(1)$ for all $\|u\|=1$. This occurs with probability at least $1-(3/\e)^d/n = 1-O(1/n)$, i.e. with high probability. (Note: We only need that $|S_{u_i, r}|$ is bounded according to Lemma~\ref{thm: slabs} for each $u_i$, then these inequalities hold simultaneously for \emph{all} groups of $k$ points. In particular, we do not need to take a union bound over the groups of neighboring points.)
\end{proof}

\begin{lem} \label{thm: hat matrix}
Suppose that $k \geq d$ and $X \in \R^{k \times d}$ has full rank. Then there exist unit vectors $u_i \in \R^n$, $i=1,\ldots,d$ such that $H \equiv X (X^\T X)^{-1}X^\T = \sum_{i=1}^d u_i u_i^\T$.
\end{lem}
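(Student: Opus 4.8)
The plan is to recognize $H = X(X^\T X)^{-1} X^\T$ as the orthogonal projection onto the column space of $X$ and then apply the spectral theorem. (I interpret the $u_i$ as living in $\R^k$, matching the dimensions of $H \in \R^{k\times k}$.)

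First I would establish the two defining algebraic properties of a projection. Symmetry is immediate, since $(X^\T X)^{-1}$ is symmetric: $H^\T = X\,((X^\T X)^{-1})^\T X^\T = X (X^\T X)^{-1} X^\T = H$. Idempotence follows by direct cancellation, $H^2 = X (X^\T X)^{-1} (X^\T X)(X^\T X)^{-1} X^\T = X(X^\T X)^{-1} X^\T = H$, where invertibility of $X^\T X$ (guaranteed by the full-rank assumption together with $k \geq d$) is exactly what makes this cancellation legitimate.

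Next I would pin down the spectrum. Any symmetric idempotent matrix has all eigenvalues in $\{0,1\}$: if $Hv = \lambda v$ with $v\neq 0$, then $\lambda v = Hv = H^2 v = \lambda^2 v$, forcing $\lambda \in \{0,1\}$. To count how many equal $1$, I would compute the trace, which for a symmetric idempotent matrix equals the number of unit eigenvalues. By the cyclic property of the trace, $\mathrm{tr}(H) = \mathrm{tr}\bigl((X^\T X)^{-1} X^\T X\bigr) = \mathrm{tr}(I_d) = d$; equivalently, since $X$ has full column rank $d$ and $H$ projects onto its $d$-dimensional column space, $\mathrm{rank}(H) = d$. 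Hence exactly $d$ eigenvalues equal $1$ and the remaining $k-d$ equal $0$.

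Finally I would invoke the spectral theorem for the symmetric matrix $H$: there is an orthonormal eigenbasis, and writing $H = \sum_i \lambda_i u_i u_i^\T$ and discarding the terms with $\lambda_i = 0$ leaves $H = \sum_{i=1}^d u_i u_i^\T$, where the $u_i$ are the orthonormal (in particular unit) eigenvectors for eigenvalue $1$. This yields the claimed decomposition. I do not expect any genuine obstacle here, as the statement is a standard consequence of $H$ being a rank-$d$ orthogonal projector; the only place the hypotheses are truly needed is the invertibility of $X^\T X$, which underwrites both the idempotence computation and the rank count.
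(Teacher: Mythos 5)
Your proof is correct, but it takes a different route from the paper. The paper writes the (thin) SVD $X = U\Sigma V^\T$ and simply multiplies out $H = X(X^\T X)^{-1}X^\T = U\Sigma\Sigma^{-2}\Sigma^\T U^\T = \sum_{i=1}^d u_iu_i^\T$, identifying the $u_i$ concretely as the left singular vectors of $X$; full rank is used only to guarantee $\s_i > 0$ so that $\Sigma^{-2}$ exists. You instead characterize $H$ abstractly as a symmetric idempotent, pin down its rank via $\mathrm{tr}(H) = \mathrm{tr}\bigl((X^\T X)^{-1}X^\T X\bigr) = d$, and invoke the spectral theorem to extract the $d$ unit-eigenvalue eigenvectors. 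The two arguments are of course two faces of the same fact (the left singular vectors for nonzero singular values are exactly the eigenvectors of $H$ with eigenvalue $1$), but yours is more self-contained in that it never names the SVD, while the paper's is a shorter one-line computation once the SVD is written down. Both deliver slightly more than the lemma asks for, namely that the $u_i$ can be taken orthonormal rather than merely of unit norm, which is in fact what the downstream proof of Lemma~\ref{thm: good core whp} uses when it extends $u_1,\ldots,u_d$ to an orthonormal basis. You also correctly flag that the $u_i$ live in $\R^k$, not $\R^n$ as the statement's typo suggests.
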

\begin{proof}
Let $X = U\Sigma V^\T$ be the SVD of $X$, where $\Sigma \in \R^{k\times d}$ has diagonal entries $\s_i$. Since $X$ has full rank, $\s_i > 0$ for all $i=1,\ldots,d$. Define $\Sigma^{-2} = \textrm{diag}(\s_i^{-2}) \in \R^{d\times d}$. We have
\begin{align*}
    H &= (U \Sigma V^\T) (V\Sigma^{-2}V^\T) V \Sigma^\T U^\T \\[5pt]
    &= U \Sigma (\Sigma^{-2}) \Sigma^\T U^\T \\
    &= \sum_{i=1}^d u_i u_i^\T,
\end{align*}
where $u_i \in \R^k$ are the columns of $U$ (i.e., the left singular vectors of $X$).
\end{proof}

\begin{lem} \label{thm: anderson}
Let $Y \sim \cN(0, \Sigma)$, where $\Sigma \in \R^{m\times m}$ has singular values $\s_1 \geq \cdots \geq \s_m$. Let $\mu \in \R^m$ be independent of $Y$. If $\s_m \geq \s$, then
$$ \P\l(\|\mu + Y\| \leq \s\sqrt{m} - t\r) \leq \P\l(\|Y\|\leq \s\sqrt{m} - t\r) \leq 2\exp\l(-Ct^2/\s^2\r) $$
for some universal constant $C$.
\end{lem}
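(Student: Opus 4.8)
The statement splits into two inequalities, and I would prove them separately. The first, $\P\l(\|\mu + Y\| \le \s\sqrt{m} - t\r) \le \P\l(\|Y\| \le \s\sqrt{m} - t\r)$, is an instance of \emph{Anderson's inequality}: translating a symmetric, unimodal distribution away from its center cannot increase the mass it assigns to a symmetric convex set. The second is a lower-tail concentration bound for the Euclidean norm of a (possibly correlated) centered Gaussian vector. Recognizing that the first inequality is exactly Anderson's inequality is the key conceptual step; the rest is standard.

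For the first inequality I would invoke Anderson's theorem directly. The set $K_r = \{z \in \R^m : \|z\| \le r\}$ with $r = \s\sqrt{m} - t$ is convex and symmetric about the origin, and the density of $Y \sim \cN(0,\Sigma)$ is even and log-concave (in particular quasi-concave, hence symmetric unimodal in Anderson's sense), using that $\Sigma$ is nonsingular since $\s_m \ge \s > 0$. Anderson's theorem then gives, for \emph{every fixed} vector $v$, that $\P(Y + v \in K_r) \le \P(Y \in K_r)$. Because $\mu$ is independent of $Y$, I would condition on $\mu$ and integrate:
\[
\P(\|\mu + Y\| \le r) = \E_\mu\l[\P(\|Y + \mu\| \le r \mid \mu)\r] \le \E_\mu\l[\P(\|Y\| \le r)\r] = \P(\|Y\| \le r),
\]
which is precisely the first inequality.

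For the second inequality I would write $Y = \Sigma^{1/2} Z$ with $Z \sim \cN(0, I_m)$, and use the hypothesis $\s_m \ge \s$ to obtain the pointwise lower bound $\|Y\| \ge \s\|Z\|$, so that
\[
\P(\|Y\| \le \s\sqrt{m} - t) \le \P\l(\|Z\| \le \sqrt{m} - t/\s\r).
\]
Since $z \mapsto \|z\|$ is $1$-Lipschitz, Gaussian concentration (Borell's inequality about the mean, or the Lévy--Gaussian isoperimetric inequality about the median) yields $\P(\|Z\| \le \sqrt{m} - s) \le 2\exp(-cs^2)$ for a universal constant $c$. The leading factor $2$ together with $c$ absorbs the $O(1)$ gap between $\sqrt{m}$ and $\E\|Z\|$ (recall $\E\|Z\| \le \sqrt{m}$ with $\sqrt{m} - \E\|Z\| = O(1)$): for small $t/\s$ the bound $2\exp(-ct^2/\s^2)$ trivially exceeds any probability, and for large $t/\s$ the gap is negligible relative to the deviation. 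Taking $s = t/\s$ then gives $\P(\|Y\| \le \s\sqrt{m} - t) \le 2\exp(-Ct^2/\s^2)$ with $C = c$.

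The routine parts are the concentration estimate and the conditioning argument that reduces the random mean $\mu$ to the fixed-shift case. The step requiring genuine care is the first inequality: for a general distribution a shift \emph{can} increase the probability of a small-norm event, so the argument truly relies on the symmetry and unimodality of the Gaussian density and on $K_r$ being a symmetric convex set. Thus the main obstacle is simply verifying the hypotheses of Anderson's theorem; once that is done, everything else reduces to standard Gaussian tail bounds.
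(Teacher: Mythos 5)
Your proof is correct. The paper itself does not prove this lemma --- it simply cites Lemmas 9 and 10 of \cite{izzo2022stateful} --- so there is no in-paper argument to compare against, but your two-step decomposition is the standard route and almost certainly what the cited lemmas do (the label \texttt{thm: anderson} itself points to Anderson's theorem). For the first inequality, invoking Anderson's theorem for the symmetric convex set $\{z : \|z\| \le r\}$ and handling the random shift by conditioning on $\mu$ (using independence) and integrating is exactly right. For the second, the reduction $\|Y\| \ge \s\|Z\|$ with $Z \sim \cN(0, I_m)$ followed by Lipschitz Gaussian concentration of $\|Z\|$, with the factor $2$ and the constant $C$ absorbing the $O(1)$ gap between $\sqrt{m}$ and $\E\|Z\|$, is also correct. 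One small caveat, which is really a defect of the lemma statement rather than of your proof: the pointwise bound $\|Y\| \ge \s\|Z\|$ requires $\lambda_{\min}(\Sigma) \ge \s^2$, i.e.\ it reads the $\s_i$ as the singular values of $\Sigma^{1/2}$ rather than of $\Sigma$ as literally written. That reading is the one consistent with the scaling $\s\sqrt{m}$ in the conclusion and with how the lemma is invoked in the proof of Lemma~\ref{thm: good core whp} (covariance $\s_0^2 P$ for a projection $P$, deviation scale $\s_0$), so you have interpreted the statement the way it is actually used; it would just be worth making that convention explicit.
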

\begin{proof}
This follows directly from Lemmas 9 and 10 in \cite{izzo2022stateful}.
\end{proof}

\goodcore*
\begin{proof}
By Lemma~\ref{thm: knn in Rstar}, there exists a group of $k$ points contained in $R^*$ with high probability. For these points, we have
\begin{align}
\min_\b \frac1k \sum_{i=1}^k (x_i^\T \b - y_i)^2 &\leq \frac1k \sum_{i=1}^k (x_i^\T \b^* - y_i)^2 \\[5pt]
&= \frac1k \sum_{i=1}^k (y_i - \E[y_i | x_i])^2 \nn \\[5pt]
&\leq \s^2 + k^{-1/8} \hspace{.15in} \textrm{w.p.} \geq 1-O(k^{-3/2}). \label{eq: inside Rstar ub}
\end{align}
On the other hand, let $\d > 0$ be fixed and consider a group of $k$ points at least $m \geq \d k$ of which are \emph{not} in $R^*$. WLOG assume that the first $i=1,\ldots, m$ points lie outside $R^*$ and the remaining $k-m$ points are in $R^*$. Let $\mu_i = \E[y_i|x_i]$ and $z_i = y_i - \mu_i$. Then we have
\begin{align}
    \min_\b \frac1k \sum_{i=1}^k (x_i^\T \b - y_i)^2 &= \min_\b \frac1k \sum_{i=1}^k (x_i^\T \b - (\mu_i + z_i))^2 \nn \\[5pt]
    &\geq \frac1k \min_\b \sum_{i=1}^m (x_i^\T \b - (\mu_i + z_i))^2 + \frac1k \min_\b \sum_{i=m + 1}^k (x_i^\T \b - (\mu_i + z_i))^2 \nn \\[5pt]
    &= \frac1k \l\|(I-H_1)(\boldsymbol{\mu}_1 + Z_1)\r\|^2 + \frac1k\l\|(I-H_2)(\boldsymbol{\mu}_2 + Z_2)\r\|^2, \label{eq: outside Rstar lb 1}
\end{align}
where we define $X_1$ as the data matrix for the $m$ points not in $R^*$ and $H_1 = X_1 (X_1^\T X_1)^{-1} X_1^\T$. We define $\boldsymbol{\mu}_1 = (\mu_1,\ldots,\mu_m)^\T$ and $Z_1 = (z_1,\ldots,z_m)^\T$. $X_2, H_2$, $\boldsymbol{\mu}_2$, and $Z_2$ are defined similarly for the $k-m$ points in $R^*$.

By Lemma~\ref{thm: hat matrix}, we can write $H_1 = \sum_{i=1}^d u_i u_i^\T$ for some orthonormal $u_i \in \R^m$. Extend to an orthonormal basis $u_1,\ldots,u_m$ for $\R^m$, and note that $I = \sum_{i=1}^m u_i u_i^\T$. It follows that $I - H_1 = \sum_{i=d+1}^m u_i u_i^\T$, and therefore $$(I - H_1) Z_1 = (u_{d+1}^\T Z_1) u_{d+1} + \cdots + (u_m^\T Z_1) u_m.$$
In particular, $(I-H_1)Z_1 \sim \cN(0, I_{m-d})$. A similar calculation shows that $(I - H_2) Z_2 \sim \cN(0, I_{n-m-d})$. (Here we assume that $m \leq n-d$; if not, we can just use the fact that the second term in \eqref{eq: outside Rstar lb 1} is nonnegative.) By applying Lemma~\ref{thm: anderson}, we obtain
\begin{align}
    &\P\l[ \min_\b \sum_{i=1}^k (x_i^\T \b - y_i)^2 \leq \s_0^2\l(\sqrt{m-d} - \sqrt{\frac{\log k}{C}}\r)^2 + \s^2 \l(\sqrt{k-m-d} - \sqrt{\frac{\log k}{C}} \r)^2 \r] \nn \\[5pt]
    &\leq \P\l[\l\|(I-H_1)(\boldsymbol{\mu}_1 + Z_1)\r\| \leq \s_0\l(\sqrt{m-d} - \sqrt{\frac{\log k}{C}} \r)\r] + \P\l[ \l\|(I-H_2)(\boldsymbol{\mu}_2 + Z_2)\r\| \leq \s\l(\sqrt{k-m-d} - \sqrt{\frac{\log k}{C}} \r) \r] \nn \\[5pt]
    &\leq 4/k. \nn
\end{align}
The final inequality is obtained by applying Lemma~\ref{thm: anderson} to each of the two preceding terms. It follows that with high probability (at least $1-4/k$), we have that
\begin{align}
    \min_\b \frac1k \sum_{i=1}^k (x_i^\T \b - y_i)^2 &\geq \frac{\s_0^2 m}{k} + \frac{\s^2 (k-m)}{k} - o(1) \nn \\
    &\geq \s^2 + (\s_0^2 - \s^2) \d - o(1). \label{eq: outside Rstar lb 2}
\end{align}
For any constant $\d>0$ and $k$ large enough, \eqref{eq: outside Rstar lb 2} will be strictly greater than the upper bound in \eqref{eq: inside Rstar ub}. It follows that with high probability, all but $o(k)$ points in the selected core group will belong to $R^*$.
\end{proof}

\begin{lem} \label{thm: inverses are close}
Let $X$ be the group of $k$ points selected by Algorithm~\ref{alg: core group}, and define $\widetilde{X} = X \cap R^*$. Then we have that $\|(\frac1k X^\T X)^{-1} - (\frac1k \widetilde{X}^\T \widetilde{X})^{-1}\| = o(1)$ with high probability.
\end{lem}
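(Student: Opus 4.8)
The plan is to exploit the fact that $\widetilde{X}$ is obtained from $X$ by deleting only $o(k)$ rows, so that the two Gram matrices $A := \frac1k X^\T X$ and $\widetilde{A} := \frac1k \widetilde{X}^\T \widetilde{X}$ are close in operator norm, and then to convert closeness of the matrices into closeness of their inverses via the resolvent identity.

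First I would quantify the perturbation. By Lemma~\ref{thm: good core whp}, with high probability the core group contains only $m := |X_{\mathrm{core}} \setminus R^*| = o(k)$ points outside $R^*$. Since passing from $X$ to $\widetilde{X}$ removes exactly the rank-one contributions of these rows from the Gram matrix,
\begin{equation*}
A - \widetilde{A} = \frac1k \sum_{i \,:\, x_i \notin R^*} x_i x_i^\T,
\end{equation*}
a sum of $m$ positive semidefinite rank-one matrices. Using boundedness of the features (Assumption~\ref{assumption: bdd x}), each summand has operator norm $\|x_i\|^2 \le B^2$, so by the triangle inequality $\|A - \widetilde{A}\| \le m B^2 / k = o(1)$ with high probability.

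Next I would control the smallest singular values of both matrices. By Lemma~\ref{thm: lambda min}, every group of $k$ points---in particular the selected core group---satisfies $\smin(A) = \Omega(1)$ with high probability, so $\|A^{-1}\| = O(1)$. To handle $\widetilde{A}$, which has only $k-m$ rows and so is not directly covered by Lemma~\ref{thm: lambda min}, I would avoid re-running the slab argument and instead apply Weyl's inequality: $\smin(\widetilde{A}) \ge \smin(A) - \|A - \widetilde{A}\| = \Omega(1) - o(1) = \Omega(1)$, giving $\|\widetilde{A}^{-1}\| = O(1)$. Finally, the resolvent identity $A^{-1} - \widetilde{A}^{-1} = A^{-1}(\widetilde{A} - A)\widetilde{A}^{-1}$ combined with submultiplicativity yields
\begin{equation*}
\|A^{-1} - \widetilde{A}^{-1}\| \le \|A^{-1}\| \, \|A - \widetilde{A}\| \, \|\widetilde{A}^{-1}\| = O(1) \cdot o(1) \cdot O(1) = o(1),
\end{equation*}
as claimed.

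The main obstacle is the mismatch in the number of rows: $\widetilde{A}$ is built from $k-m$ points but normalized by $k$, and Lemma~\ref{thm: lambda min} as stated applies to groups of exactly $k$ points, so it cannot be invoked verbatim for $\widetilde{A}$. The cleanest way around this is the Weyl-inequality step above, which transfers the well-conditioning of $A$ to $\widetilde{A}$ without a fresh concentration argument; as a byproduct it also confirms that $\widetilde{A}$ is invertible with high probability, so that $\widetilde{A}^{-1}$ is well-defined in the first place. Everything else is a routine perturbation estimate, and all statements hold on the common high-probability event on which Lemmas~\ref{thm: good core whp} and~\ref{thm: lambda min} both succeed.
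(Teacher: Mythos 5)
Your proof is correct and follows essentially the same route as the paper: both bound $\|A - \widetilde{A}\| = o(1)$ using Lemma~\ref{thm: good core whp} and boundedness of the features, transfer $\smin(A) = \Omega(1)$ from Lemma~\ref{thm: lambda min} to $\widetilde{A}$ via Weyl's inequality, and conclude with the identity $A^{-1} - \widetilde{A}^{-1} = A^{-1}(\widetilde{A}-A)\widetilde{A}^{-1}$. Your explicit remark about the row-count mismatch is a nice clarification of a point the paper handles implicitly, but the argument is the same.
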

\begin{proof}
Let $A = \frac1k X^\T X$ and $B = \frac1k \widetilde{X}^\T \widetilde{X}$. By Lemma~\ref{thm: lambda min}, $\smin(A)\geq c_4 = \Omega(1)$ with high probability. By Lemma~\ref{thm: good core whp}, $\widetilde{X}$ contains all but $o(k)$ of the selected points. Also recall that by our assumptions, all of the $x$ are bounded. By Weyl's inequality, we have $$\smin(B) \geq \smin(A) - \smax\l( \frac1k\sum_{x\in X \setminus R^*} xx^\T \r) = \Omega(1) - o(1) = \Omega(1).$$
In particular, this means that $\|B^{-1}\| = \smin(B)^{-1} = O(1)$. Finally, since $A^{-1} - B^{-1} = A^{-1}(B-A)B^{-1}$, we have
$$ \|A^{-1} - B^{-1}\| \leq \|A^{-1}\| \|B - A\| \|B^{-1}\| = O(1)\: \cdot \: o(1) \: \cdot \: O(1) = o(1).$$
\end{proof}

\begin{lem} \label{thm: beta hat bdd}
$\|\hat{\b}\| = O(1)$ with high probability.
\end{lem}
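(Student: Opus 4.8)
The plan is to bound $\|\hat{\b}\|$ by relating it to $\|Y\|$ through the OLS geometry, and then to control $\|Y\|$ using what is already known about the core group. Write $X = X_{\mathrm{core}}$ and $Y$ for its labels, so that $\hat{\b} = (X^\T X)^{-1}X^\T Y$. Two observations pin down $\|\hat{\b}\|$ in terms of $\|Y\|$. First, $X\hat{\b}$ is the orthogonal projection of $Y$ onto the column space of $X$, so $\|X\hat{\b}\| \leq \|Y\|$. Second, by Lemma~\ref{thm: lambda min} we have $\smin(\tfrac1k X^\T X) \geq c_4 = \Omega(1)$ with high probability, whence $\|X\hat{\b}\|^2 = \hat{\b}^\T X^\T X \hat{\b} \geq k c_4 \|\hat{\b}\|^2$. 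Combining the two gives $\|\hat{\b}\|^2 \leq \|Y\|^2/(k c_4)$, so it suffices to show $\frac1k\|Y\|^2 = \frac1k\sum_{i\in X_{\mathrm{core}}} y_i^2 = O(1)$ with high probability.

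To bound $\frac1k\|Y\|^2$ I would split the core points into those inside $R^*$ and those outside. For the inside points, each $y_i \sim \cN(x_i^\T \b^*, \s^2)$ has $\E[y_i^2] = (x_i^\T\b^*)^2 + \s^2 = O(1)$, since $\|x_i\|\leq B$ and $\b^*$ is fixed; hence $y_i^2$ has uniformly bounded moments. Because the core group is selected adaptively (it depends on $Y$), I would take a union bound over the $n$ candidate $k$-nearest-neighbor neighborhoods, which are determined by the features alone: conditional on $X$ the labels are independent, so applying a concentration inequality (Lemma~\ref{thm: chebyshev}, or a standard Bernstein bound for the sub-exponential $y_i^2$) to each neighborhood and union bounding over the $n$ of them---valid because $k = \Omega(n)$ makes the per-group failure probability beat the factor $n$---yields $\sum_{i \in G \cap R^*} y_i^2 = O(k)$ simultaneously for every neighborhood $G$, and in particular for the core group.

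The outside points require more care, since their labels have no a priori bound. Here I would use two facts about the core group. By Lemma~\ref{thm: good core whp}, at most $o(k)$ core points lie outside $R^*$. Moreover, since Algorithm~\ref{alg: core group} selects the minimum-MSE neighborhood and a fully interior neighborhood achieves MSE at most $\s^2 + k^{-1/8}$ (the bound \eqref{eq: inside Rstar ub} from the proof of Lemma~\ref{thm: good core whp}), the residual vector $r = X\hat{\b} - Y$ satisfies $\|r\|^2 = k\cdot\mathrm{MSE} = O(k)$. Writing $y_i = x_i^\T\hat{\b} + r_i$ for each outside point and using $|x_i^\T\hat{\b}| \leq B\|\hat{\b}\|$, the outside contribution is $\sum_{i\in X_{\mathrm{core}}\setminus R^*} y_i^2 \leq o(k)\cdot B^2\|\hat{\b}\|^2 + O(k)$. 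Adding the two contributions gives $\|Y\|^2 \leq O(k) + o(k)\|\hat{\b}\|^2$, and substituting into $\|\hat{\b}\|^2 \leq \|Y\|^2/(k c_4)$ produces a self-bounding inequality $\|\hat{\b}\|^2 \leq \frac{1}{c_4}\left(O(1) + o(1)\|\hat{\b}\|^2\right)$. For $k$ large the coefficient $o(1)/c_4$ drops below $\tfrac12$, so the $\|\hat{\b}\|^2$ term absorbs into the left-hand side and $\|\hat{\b}\|^2 = O(1)$.

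The main obstacle is precisely this handling of the outside points: Assumption~\ref{assumption: outside} constrains only the variance of $y$ outside $R^*$, not its mean or any upper bound, so the outside labels could be large and $\|Y\|$ cannot be controlled term by term. The resolution is that the outside points are few ($o(k)$) and have residuals controlled by the low core MSE, so their total contribution scales as $o(k)\|\hat{\b}\|^2$ and folds back into the self-bounding inequality; this is exactly what converts a seemingly circular estimate into a genuine bound. A secondary technical point is the adaptive selection of the core group, which I would dispatch by the union bound over the $n$ feature-determined neighborhoods noted above.
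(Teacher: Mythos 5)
Your proof is correct, but it takes a genuinely different route from the paper's. The paper starts from the same core MSE bound $\frac1k\|X\hat{\b}-Y\|^2 \leq \s^2 + o(1)$, restricts to the interior points $\tX$, expands $\|\tX\hat{\b}-\tY\|^2$ around $\b^*$, and solves the resulting quadratic inequality (using $\frac1k\|\tX\b^*-\tY\|^2 = \s^2 + o(1)$) to get $\|\tX(\hat{\b}-\b^*)\| = O(\sqrt{k})$, whence $\|\hat{\b}-\b^*\| = O(1)$ via $\smin(\tX^\T\tX)=\Omega(k)$. You instead exploit the OLS projection contraction $\|X\hat{\b}\|\leq\|Y\|$ together with the same singular-value lower bound to reduce everything to $\frac1k\|Y\|^2 = O(1)$, and you close the loop on the unbounded outside labels with a self-bounding inequality. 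Both arguments rest on the same two pillars (the $\s^2+o(1)$ MSE bound and Lemma~\ref{thm: lambda min}); the paper's version is shorter once those are in hand and yields the slightly stronger conclusion $\|\hat{\b}-\b^*\|=O(1)$ (which it reuses later), while yours avoids the quadratic-formula step and the two-sided estimate of $\|\tX\b^*-\tY\|$. One small repair to your argument: the per-neighborhood concentration plus union bound for the inside points is shakier than needed, since a neighborhood with few points in $R^*$ gives weak per-group concentration. It is cleaner to bound $\sum_{i:\,x_i\in R^*} y_i^2$ over \emph{all} $n$ data points in $R^*$ at once (a single application of Lemma~\ref{thm: chebyshev}, conditionally on the features), which is $O(n)=O(k)$ and dominates the interior contribution of every candidate neighborhood simultaneously. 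With that adjustment the self-bounding step $\|\hat{\b}\|^2 \leq \frac{1}{c_4}\left(O(1) + o(1)\|\hat{\b}\|^2\right)$ goes through exactly as you describe.
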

\begin{proof}
We know from the proof of Lemma~\ref{thm: good core whp} that there exists a group of $k$ nearest neighbors contained in $R^*$, and that for such a group, the training MSE is at most $\s^2 + o(1)$ with high probability. Thus, since the core group has the minimum training MSE, we must have
\begin{align}
    \s^2 + o(1) &\geq \frac1k \|X\hat{\b} - Y\|^2 \nn \\[5pt]
    &\geq \frac1k \|\tX\hat{\b} - \tY\|^2 \nn \\[5pt]
    &\geq \frac1k \|\tX\b^* - \tY\|^2 - \frac2k \|\tX\b^* - \tY\|\|\tX(\hat{\b}-\b^*)\| + \frac1k \|\tX(\hat{\b}-\b^*)\|^2 \nn \\[5pt]
    &\geq \s^2 - o(1) + \frac1k \l( \|\tX(\hat{\b}-\b^*)\|^2 - 2\|\tX\b^* - \tY\|\|\tX(\hat{\b}-\b^*)\| \r). \label{eq: beta hat bdd 1}
\end{align}
Inequality \eqref{eq: beta hat bdd 1} holds because $\tX, \tY$ contain $k-o(k)$ points and by roughly the same logic used to obtain the upper bound in \eqref{eq: inside Rstar ub}. It follows that
\begin{equation} \label{eq: beta hat bdd 2}
\frac1k \l( \|\tX(\hat{\b}-\b^*)\|^2 - 2\|\tX\b^* - \tY\|\|\tX(\hat{\b}-\b^*)\| \r) = o(1).
\end{equation}

Again using the logic from \eqref{eq: inside Rstar ub}, we have that $\|\tX\b^* - \tY\| = \s\sqrt{k} + o(\sqrt{k})$. Combining this with equation \eqref{eq: beta hat bdd 2} therefore shows that $\|\tX(\hat{\b}-\b^*)\| = O(\sqrt{k})$ (this follows from a simple application of the quadratic formula), or equivalently $\|\tX(\hat{\b}-\b^*)\|^2 = O(k)$.

Finally, observe that $$\|\tX(\hat{\b}-\b^*)\|^2 = (\hat{\b}-\b^*)\tX^\T \tX(\hat{\b}-\b^*) \geq \smin(\tX^\T \tX) \|\hat{\b}-\b^*\|^2.$$ By the proof of Lemma~\ref{thm: inverses are close}, we know that $\smin(\frac1k \tX^\T \tX) = \Omega(1)$, so $\smin(\tX^\T \tX) = \Omega(k)$. Thus we have
$$ O(k) = \|\tX(\hat{\b} - \b^*)\|^2 \geq \Omega(k) \|\hat{\b}-\b^*\|^2 \hspace{.15in} \Longrightarrow \hspace{.15in} \|\hat{\b} - \b^*\| = O(1). $$
Since $\|\b^*\|$ is a constant, we conclude that $\|\hat{\b}\| = O(1)$ by the triangle inequality.
\end{proof}

\begin{lem} \label{thm: XTY is close}
$\l\| \frac1k X^\T Y - \frac1k \tX^\T \tY \r\| = o(1)$ with high probability.
\end{lem}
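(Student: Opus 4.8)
The plan is to note that the rows of $X$ and $\tX$ (and the entries of $Y$ and $\tY$) that lie in $R^*$ are identical, so the difference collapses to a sum over the points of the core group that lie \emph{outside} $R^*$:
$$\frac1k X^\T Y - \frac1k \tX^\T \tY = \frac1k \sum_{x_i \in X\setminus R^*} x_i y_i.$$
By Lemma~\ref{thm: good core whp}, this sum has only $|X\setminus R^*| = o(k)$ terms with high probability, and since $\|x_i\| \leq B$ by assumption, it suffices to prove the scalar bound $\sum_{x_i\in X\setminus R^*} |y_i| = o(k)$; the desired norm is then at most $\frac{B}{k}\cdot o(k) = o(1)$.

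The hard part is that for $x_i\notin R^*$ the label $y_i$ is an unbounded Gaussian, so the naive bound $\max_i|y_i| = O(\sqrt{\log n})$ is both wasteful and awkward to combine with the $o(k)$ count. Instead I would leverage the defining property of the core group, namely that it minimizes the training MSE. As shown in the proof of Lemma~\ref{thm: good core whp}, this minimum is at most $\s^2 + o(1)$ with high probability, so writing $r_i = y_i - \hat\b^\T x_i$ for the fitted residuals we get $\sum_{i=1}^k r_i^2 = \|X\hat\b - Y\|^2 \leq (\s^2 + o(1))k = O(k)$.

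Finally I would split $|y_i| \leq |\hat\b^\T x_i| + |r_i| \leq B\|\hat\b\| + |r_i|$ and sum over the bad points. The first term contributes $|X\setminus R^*|\cdot B\|\hat\b\| = o(k)\cdot O(1) = o(k)$, using Lemma~\ref{thm: good core whp} for the count and Lemma~\ref{thm: beta hat bdd} for $\|\hat\b\| = O(1)$. For the residual term, Cauchy--Schwarz gives
$$\sum_{x_i\in X\setminus R^*} |r_i| \leq \sqrt{|X\setminus R^*|}\,\sqrt{\sum_{i=1}^k r_i^2} \leq \sqrt{o(k)}\cdot\sqrt{O(k)} = o(k).$$
Adding the two pieces yields $\sum_{x_i\in X\setminus R^*}|y_i| = o(k)$, which completes the bound. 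All of the invoked events (the $o(k)$ core-group count, the MSE bound, and $\|\hat\b\| = O(1)$) hold simultaneously with high probability, so the conclusion holds with high probability as well.
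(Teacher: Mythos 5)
Your proof is correct, and it rests on the same ingredients as the paper's argument --- the reduction of the difference to the $o(k)$ core points outside $R^*$ (Lemma~\ref{thm: good core whp}), the core-group MSE bound $\frac1k\|X\hat{\b}-Y\|^2 \leq \s^2 + o(1)$, and $\|\hat{\b}\| = O(1)$ (Lemma~\ref{thm: beta hat bdd}) --- but it executes the key estimate on the unbounded labels differently. The paper splits the bad points according to whether $|y| \geq 2C+1$ (with $C$ a bound on $\|x\|\|\hat{\b}\|$) and uses the pointwise inequality $(x^\T\hat{\b}-y)^2 \geq |y|(|y|-2\|x\|\|\hat{\b}\|) \geq |y|$ on the large-$|y|$ part; to make this work it must squeeze the bad points' contribution to the sum of squared residuals down to $o(k)$, which in turn requires the lower bound $\frac1k\|\tX\hat{\b}-\tY\|^2 \geq \s^2 - o(1)$ on the in-$R^*$ contribution. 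You instead write $|y_i| \leq B\|\hat{\b}\| + |r_i|$ and apply Cauchy--Schwarz, $\sum_{\mathrm{bad}}|r_i| \leq \sqrt{o(k)}\cdot\sqrt{O(k)} = o(k)$, which needs only the crude total bound $\sum_i r_i^2 = O(k)$ together with the $o(k)$ count. Your route is slightly more economical --- it dispenses with the magnitude split and the squeeze argument entirely --- while the paper's version yields the marginally more refined statement that the large-$|y|$ bad points contribute only $o(1)$ to $\frac1k\sum|y|$; for the purposes of this lemma the two are interchangeable, and all the high-probability events you invoke are established before this point in the paper, so there is no circularity.
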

\begin{proof}
We begin with the same observation used to prove Lemma~\ref{thm: beta hat bdd}, namely that the training MSE for the core group must be upper bounded by $\s^2 + o(1)$ with high probability. By Assumption~\ref{assumption: density}, $\|x\| = O(1)$, and by Lemma~\ref{thm: beta hat bdd}, $\|\hat{\b}\| = O(1)$ with high probability. Let $C$ be a constant such that $\|x\|\|\hat{\b}\| \leq C$. We then have
\begin{align}
\frac1k \|X\hat{\b} - Y\|^2 &\geq \frac1k \|\tX\hat{\b} - \tY\|^2 + \frac1k \sum_{(x, y)\: : \: x\not\in R^*}(x^\T \hat{\b} - y)^2 \nn \\[5pt]
&\geq \s^2 - o(1) + \frac1k \sum_{\substack{(x, y) : x\not\in R^* \\ |y| \geq 2C+1}} |y|(|y|-2\|x\|\|\hat{\b}\|) \nn \\[5pt]
&\geq \s^2 - o(1) + \frac1k \sum_{\substack{(x, y) : x\not\in R^* \\ |y| \geq 2C+1}} |y|. \nn
\end{align}
It therefore follows that 
\begin{equation} \label{eq: XTY is close 2}
\frac1k \sum_{\substack{(x, y) : x\not\in R^* \\ |y| \geq 2C+1}} |y| = o(1).
\end{equation}

Since $\|x\| \leq B$, we now have
\begin{align*}
\l\| \frac1k X^\T Y - \frac1k \tX^\T \tY \r\| &\leq \frac1k\sum_{(x, y) : x\not\in R^*} \|x\||y| \\[5pt]
&\leq  \frac1k \sum_{\substack{(x, y) : x\not\in R^* \\ |y| < 2C+1}} B(2C+1) + \frac1k\sum_{\substack{(x, y) : x\not\in R^* \\ |y| \geq 2C+1}} B|y| \\[5pt]
&= o(1).
\end{align*}
The final conclusion holds by applying Lemma~\ref{thm: good core whp} and equation \eqref{eq: XTY is close 2} to the two terms in the previous line.
\end{proof}

\nobadrej*
\begin{proof}
First, we will show that $\|\b^* - \hat{\b}\| = o(1)$ with high probability. Let $\tX = X \cap R^*$ denote the data matrix for the core points which belong to $R^*$, and let $\tY$ denote the response vector corresponding to these points. We use the identity
$$ \hat{\b} - \b^* = \underbrace{\l[ \l(\frac1k X^\T X\r)^{-1} - \l(\frac1k \tX^\T \tX\r)^{-1} \r] \l(\frac1k X^\T Y\r)}_{\mathrm{(I)}} +  \underbrace{\l(\frac1k \tX^\T \tX\r)^{-1} \l[ \frac1k X^\T Y - \frac1k \tX^\T \tY \r]}_{\mathrm{(II)}} + \underbrace{\l(\frac1k \tX^\T \tX\r)^{-1}\l(\frac1k \tX^\T \tY\r) - \b^*}_{\mathrm{(III)}}.$$

\paragraph{Term (I)} By Lemma~\ref{thm: inverses are close}, $\| (\frac1k X^\T X)^{-1} - (\frac1k \tX^\T \tX)^{-1} \| = o(1)$. By Lemma~\ref{thm: XTY is close} and an application of the triangle inequality, $\|\frac1k X^\T Y\| = O(1)$, so term (I) is $o(1)$.

\paragraph{Term (II)} By the proof of Lemma~\ref{thm: inverses are close}, $\|(\frac1k \tX^\T \tX)^{-1}\| = O(1)$. By Lemma~\ref{thm: XTY is close}, $\|\frac1k X^\T Y - \frac1k \tX^\T \tY\| = o(1)$, so term (II) is $o(1)$.

\paragraph{Term (III)} Let $k'$ be the number of points in $\tX$ (so $k' = k - o(k)$) and WLOG assume that the points in $\tX$ are the first $k'$ points $x_1,\ldots, x_{k'}$. We have $\tY = \tX\b^* + E$, where $E = (\e_i)_{i=1}^{k'}$ is the vector of error terms and $\e_i \sim \cN(0, \s^2)$. Define $\tilde{\b} = (\frac{1}{k'} \tX^\T \tX)^{-1} (\frac{1}{k'} \tX^\T \tY)$ and note that this is still equal to the first term in (III). It follows that
$$\tilde{\b} = \b^* + \l(\frac{1}{k'} \tX^\T \tX\r)^{-1}\sum_{i=1}^{k'} \e_i x_i,$$
This implies that
$$\|\hat{\b} - \b^*\| \leq \l\|\l(\frac{1}{k'}\tX^\T \tX\r)^{-1}\r\| \l\|\frac{1}{k'} \sum_{i=1}^k \e_i x_i\r\| = \s \smin^{-1} \l\|\frac{1}{k'} \sum_{i=1}^{k'} g_i x_i\r\|,$$
where $g_i\iid \cN(0,1)$ and $\smin = \smin(\frac1k \tX^\T \tX)$. It remains to bound $\l\|\frac{1}{k'} \sum_{i=1}^{k'} g_i x_i\r\|$ with high probability. Observe that
$$ \P\l(\l\|\frac{1}{k'} \sum_{i=1}^{k'} g_i x_i\r\| \geq t\r) \leq \sum_{j=1}^d \P\l( \l|\frac{1}{k'}\sum_{i=1}^{k'} g_i x_{ij}\r| \geq \frac{t}{\sqrt{d}}\r). $$
Standard Gaussian concentration results (see e.g. \citep{vershynin2018hdp}) show that the RHS is bounded by $2d\exp\l(\frac{-c k' t^2}{d}\r)$ for some universal constant $c$. Setting this bound equal to $1/k'$ and solving for $t$, we see that $\|\hat{\b} - \b^*\| \leq C\s \smin^{-1} \sqrt{\frac{d \log(2dk')}{k'}}$ with probability at least $1-1/k'$, i.e. with high probability. (Here $C$ is another universal constant.) Since $\smin = \Omega(1)$, we have $\|\hat{\b}-\b^*\| = o(1)$.

Next, we look at $|y_i - x_i^\T \hat{\b}|$ for a point $x_i\in R^*$. In this case, applying the triangle inequality and Cauchy-Schwarz, we have
$$ |y_i - x_i^\T \hat{\b}| = |x_i^\T \b^* - x_i^\T \hat{\b} + \e_i| \leq \|\b^*-\hat{\b}\|\|x_i\| + |\e_i|. $$
Since our dataset contains $n$ points, there are at most $n$ points in $R^*$. Again by standard Gaussian concentration results and a union bound, we have that $|\e_i| \leq \s \sqrt{2\log \frac{2n}{\a}}$ for all $x_i \in R^*$ simultaneously with probability at least $1-\a$. Thus we have that
$$|y_i - x_i^\T \hat{\b}| \leq \s\sqrt{2\log \frac{2n}{\a}} + o(1)$$
for all $x_i \in R^*$ with probability at least $1-\a-o(1)$. Setting $\a = 1/n$ and adjusting the constants slightly to account for the $o(1)$ term, we see that
$$|y_i - x_i^\T \hat{\b}| \leq 2.1\s\sqrt{\log n}$$
with high probability and for large enough $n$, as desired.
\end{proof}

\begin{lem} \label{thm: avg is in R*}
With high probability, the average of the core point feature vectors belongs to $R^*$.
\end{lem}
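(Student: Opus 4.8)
The plan is to reduce the claim to showing that the mean of the core points lying \emph{inside} $R^*$ is bounded away from the boundary $\partial R^*$ by a constant. First I would record that the core group returned by Algorithm~\ref{alg: core group} is, by construction, the set of $k$ nearest neighbors of some data point, so it is contained in a Euclidean ball $\cB(\bar x_0, r_k)$; since $k = \Omega(n)$, this ball carries probability mass $\Theta(k/n) = \Theta(1)$, and the density bounds of Assumption~\ref{assumption: density} force $r_k = \Theta(1)$. Writing $k' = |X_{\mathrm{core}} \cap R^*|$, Lemma~\ref{thm: good core whp} gives $k' = k - o(k)$; because $\|x\| \leq B$, the $o(k)$ excluded points change the average by at most $\tfrac{o(k)}{k} B = o(1)$. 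Hence $\mathrm{Mean}(X_{\mathrm{core}}) = \bar x_{\mathrm{in}} + o(1)$, where $\bar x_{\mathrm{in}}$ is the average of the in-region core points. Since $R^*$ is convex (Assumption~\ref{assumption: nonempty}), $\bar x_{\mathrm{in}} \in R^*$ already, so it remains only to keep the $o(1)$ perturbation from escaping the box.

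To control the distance of $\bar x_{\mathrm{in}}$ from each face, I would argue coordinate by coordinate against $R^* = \prod_{i=1}^d [a_i, b_i]$. The in-region points lie in $\cB(\bar x_0, r_k) \cap R^*$, which has volume $\Omega(1)$: its probability mass is $\Theta(1) - o(1)$ by Lemma~\ref{thm: good core whp}, and the lower density bound converts this into positive volume. Since the ball leaks only $o(1)$ volume past any face (otherwise it would contain $\Omega(n)$ points outside $R^*$), its center sits a constant distance inside $R^*$, so $\cB(\bar x_0, r_k) \cap R^*$ is not concentrated near any single face. Consequently, for a small enough constant $\delta$ the thin slab $\{b_i - \delta < x[i] \leq b_i\}$ occupies relative volume $O(\delta) < \tfrac12$ of $\cB(\bar x_0, r_k) \cap R^*$; combining this with the density bounds and a uniform empirical-concentration estimate (of the type proved in Lemma~\ref{thm: slabs}), at least a constant fraction of the in-region points satisfy $x[i] \leq b_i - \delta$. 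Each such point contributes at least $\delta$ to $b_i - \bar x_{\mathrm{in}}[i] = \tfrac1{k'} \sum (b_i - x[i])$, so $b_i - \bar x_{\mathrm{in}}[i] = \Omega(1)$, and symmetrically $\bar x_{\mathrm{in}}[i] - a_i = \Omega(1)$.

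Putting the pieces together, $\mathrm{Mean}(X_{\mathrm{core}})$ lies at distance $\Omega(1) - o(1) > 0$ from every face of $R^*$ with high probability, which gives $\mathrm{Mean}(X_{\mathrm{core}}) \in R^*$. The main obstacle is the middle step: ruling out that the empirical mean of the in-region points hugs a face of the box. The subtlety is that the KNN ball has \emph{constant} radius rather than a vanishing one and the density is only bounded (not uniform), so a direct estimate of the weighted centroid is too crude; the resolution is to lower-bound directly the fraction of core points sitting a constant distance inside each face, which is exactly where the $\Omega(1)$ volume of $\cB(\bar x_0, r_k) \cap R^*$, the density bounds, and the uniform concentration of empirical box-measures are needed.
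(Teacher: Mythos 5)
Your argument is correct and is essentially the paper's proof: both rest on (i) the $o(k)$ out-of-region core points shifting the mean by only $o(1)$, and (ii) a thin slab of width $\delta$ along each face containing only $O(\delta n)$ data points (density upper bound plus Hoeffding, as in Lemma~\ref{thm: slabs}), which is less than half of $k = \Omega(n)$ for $\delta$ small, so a constant fraction of core points sits a constant distance inside each face and pulls the coordinate-wise average $\Omega(1)$ into the box. The detour through the geometry of the KNN ball ($r_k = \Theta(1)$, the volume of $\cB(\bar x_0, r_k)\cap R^*$, leakage past faces) is not needed for this---the slab count compared against $k$ suffices directly, as in the paper---and note that converting a probability-mass lower bound into a volume lower bound uses the \emph{upper} density bound $C_f$, not the lower one.
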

\begin{proof}
In this proof, we will use the fact that $R^*$ is an axis-aligned box, but we note that this is just for ease of exposition and the results extend readily to the case when $R^*$ is a general convex body.

Let $R^* = \prod_{i=1}^d [a_i, b_i]$ with $a_i < b_i$ for each $i$ and define $\partial R^*_\e = \prod_{i=1}^d [a_i + \e, b_i - \e]$ for $\e < \min_i (b_i - a_i)/2$. (That is, $\partial R^*_\e$ consists of those points in $R^*$ which are at most $\e$ away from the boundary of $R^*$.) A direct calculation shows that $\vol(\partial R^*_\e) = O(\e)$. By the same logic as in Lemma~\ref{thm: slabs}, it follows that there exists an $\e = \Omega(1)$ such that $\partial R^*_\e$ contains at most $m \leq c_3 n/2$ points with high probability, where here $c_3$ is a constant such that $k \geq c_3n$.

Let $\bar{x}$ be the average of the core group feature vectors. We will show that $\bar{x}[i] \leq b_i$. A nearly identical argument will show that all of the components of $\bar{x}$ satisfy the constraints required to belong to $R^*$. Observe that
\begin{align}
    \bar{x}[i] &= \frac1k \l( \sum_{x \in R^* \setminus \partial R^*_\e} x[i] + \sum_{x \in \partial R^*_\e} x[i] + \sum_{x \not\in R^*} x[i] \r) \nn \\[5pt]
    &\leq \frac1k \Large( (k - m - o(k)) (b_i - \e) + m b_i + o(k) \Large)  \label{eq: center 1} \\[5pt]
    &= b_i + (\frac{m}{k} - 1) \e + o(1) \nn \\[5pt]
    &\leq b_i - \e/2 + o(1) \label{eq: center 2} \\[5pt]
    &\leq b_i. \nn
\end{align}
Inequality \eqref{eq: center 1} follows from the fact that the features $x$ (and hence each component $x[i]$) are bounded, and the fact that at most $o(k)$ points in the core group are not in $R^*$ by Lemma~\ref{thm: good core whp}. Inequality \eqref{eq: center 2} holds because $m \leq c_3n/2$ and $k \geq c_3n$. This completes the proof.
\end{proof}

\begin{lem} \label{thm: E abs gauss}
Suppose that $Y_i \sim \cN(0, \s_i^2)$ are independent. Then $\P(\max|Y_i-a_i| \leq t) = \P(\max|Y_i|\leq t)$ and consequently $\E\max|Y_i-a_i| \geq \E\max|Y_i|$ for any constants $a_i$.
\end{lem}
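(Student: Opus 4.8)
The plan is to reduce the multivariate statement to a one-dimensional fact about Gaussian mass and then pass from distribution functions to expectations via the tail-integral formula. First I would use independence of the $Y_i$ to factor both CDFs:
\[
\P\l(\max_i |Y_i - a_i| \le t\r) = \prod_i \P(|Y_i - a_i| \le t), \qquad \P\l(\max_i |Y_i| \le t\r) = \prod_i \P(|Y_i| \le t).
\]
It therefore suffices to establish the inequality factor by factor, namely $\P(|Y_i - a_i| \le t) \le \P(|Y_i| \le t)$ for each $i$; since all factors lie in $[0,1]$, the product inequality $\P(\max_i|Y_i - a_i| \le t) \le \P(\max_i |Y_i| \le t)$ follows immediately.

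The key step is the single-variable claim: for $Y \sim \cN(0,\s^2)$ and any constant $a$, the symmetric interval of half-width $t$ carries the most Gaussian mass, i.e. $\P(Y \in [a-t, a+t]) \le \P(Y \in [-t,t])$. I would prove this from the symmetry and unimodality of the density $\phi_\s$, which is even and decreasing in $|y|$. Setting $g(a) = \int_{a-t}^{a+t} \phi_\s(y)\,dy$, differentiation under the integral sign gives $g'(a) = \phi_\s(a+t) - \phi_\s(a-t)$. For $a \ge 0$ one has $a + t \ge |a-t|$ (equivalently $4at \ge 0$), so $\phi_\s(a+t) \le \phi_\s(a-t)$ and hence $g'(a) \le 0$; together with the symmetry $g(a) = g(-a)$ this shows $g$ is maximized at $a = 0$, which is exactly the claim. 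This is just the one-dimensional instance of Anderson's inequality, so one could alternatively cite that result in place of the explicit computation.

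Finally, I would convert the CDF comparison, valid for every $t \ge 0$, into the expectation bound via the layer-cake identity $\E Z = \int_0^\infty \P(Z > t)\,dt$ for nonnegative $Z$. Applying it to $Z = \max_i|Y_i - a_i|$ and to $Z = \max_i|Y_i|$ and using $\P(Z > t) = 1 - \P(Z \le t)$ pointwise under the integral yields $\E \max_i |Y_i - a_i| \ge \E \max_i |Y_i|$. The only genuine content is the one-dimensional mass inequality, so that is where I expect the (mild) difficulty to lie; everything else is independence and a routine tail integral. I would also flag that the CDF comparison as displayed should read ``$\le$'' rather than ``$=$'' (equality holds only when every $a_i = 0$), since this is precisely the direction required for the ``consequently'' to go through.
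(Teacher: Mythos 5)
Your proof is correct and follows essentially the same route as the paper's: factor the CDF of the maximum using independence, establish the one-dimensional comparison $\P(|Y_i - a_i|\leq t)\leq \P(|Y_i|\leq t)$ (which the paper gets by invoking its Lemma~\ref{thm: anderson}, i.e.\ Anderson's inequality, whereas you derive it directly from the symmetry and unimodality of the Gaussian density --- a self-contained but equivalent step), and then integrate the tail to pass to expectations. Your remark that the displayed ``$=$'' in the statement should be ``$\leq$'' is also correct; the paper's own proof establishes precisely that inequality.
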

\begin{proof}
Lemma~\ref{thm: anderson} implies that $\P(|Y_i-a_i| \leq t) \leq \P(|Y_i| \leq t)$ for all $i, t$. Furthermore, we have that
$$ \P(\max|Y_i-a_i| \leq t) = \prod_i \P(|Y_i-a_i|\leq t) \leq \prod_i \P(|Y_i| \leq t) = \P(\max |Y_i|\leq t). $$
Integrating by parts shows that
$$ \E\max|Y_i-a_i| = \int_0^\infty \P(\max|Y_i-a_i|\geq t)\, dt \geq \int_0^\infty \P(\max|Y_i|\geq t)\, dt = \E\max|Y_i|. $$
\end{proof}

\begin{lem} \label{thm: gaussian max}
Let $Y_i \sim \cN(0, \s_i^2)$ with $\s_i^2 \geq \s^2$ for all $i$. Then for any constants $a_i$ and $m$ large enough, we have the following inequality:
$$ \E\l[\max_{i=1}^m |Y_i - a_i|\r] \geq 0.12 \s \sqrt{\log m}.$$
Furthermore, we have that $\max_{i=1}^m |Y_i - a_i| = \Omega(\s\sqrt{\log m})$ with high probability as $m\rightarrow\infty$.
\end{lem}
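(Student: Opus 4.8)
The plan is to strip away the two complications in the statement---the shifts $a_i$ and the heterogeneous variances $\s_i$---and reduce everything to the expected (and typical) value of the maximum of $m$ i.i.d.\ centered Gaussians with variance exactly $\s^2$, which is a standard extreme-value quantity of order $\sqrt{\log m}$. First I would remove the shifts using Lemma~\ref{thm: E abs gauss}: this gives both $\E\max_i|Y_i-a_i|\geq\E\max_i|Y_i|$ and, via the CDF comparison there (taking complements), $\P(\max_i|Y_i-a_i|\geq t)\geq\P(\max_i|Y_i|\geq t)$ for every $t$. Hence it suffices to prove both the expectation bound and the high-probability bound for the centered maximum $\max_i|Y_i|$.

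Next I would remove the variance heterogeneity by stochastic domination. Since $\s_i\geq\s$, for each $i$ and each $t>0$ we have $\P(|Y_i|\geq t)=\P(|Z|\geq t/\s_i)\geq\P(|Z|\geq t/\s)=\P(|W|\geq t)$, where $Z\sim\cN(0,1)$ and $W\sim\cN(0,\s^2)$. Using independence, $\P(\max_i|Y_i|<t)=\prod_i\P(|Y_i|<t)\leq\prod_i\P(|W|<t)$, so $\max_i|Y_i|$ stochastically dominates $\max_{i=1}^m|W_i|$ for i.i.d.\ $W_i\sim\cN(0,\s^2)$. In particular $\E\max_i|Y_i|\geq\E\max_{i=1}^m|W_i|$, and the tail comparison is preserved, so it remains only to analyze the homogeneous centered maximum.

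It then remains to lower-bound $\P(\max_{i=1}^m|W_i|\geq t)$ for $t=c\s\sqrt{\log m}$ with a fixed $c<\sqrt2$. Writing $p=\P(|W|\geq t)$ and invoking the standard Gaussian tail lower bound $\P(|W|\geq t)\geq\frac{2(t/\s)}{1+(t/\s)^2}\frac{1}{\sqrt{2\pi}}e^{-t^2/(2\s^2)}$, I note that $e^{-t^2/(2\s^2)}=m^{-c^2/2}$ and $\frac{t/\s}{1+(t/\s)^2}\gtrsim 1/\sqrt{\log m}$, so $pm\gtrsim m^{1-c^2/2}/\sqrt{\log m}\to\infty$ whenever $c<\sqrt2$. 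Consequently $\P(\max_i|W_i|\geq t)=1-(1-p)^m\geq 1-e^{-pm}\to1$, which (combined with the two reductions) immediately yields the ``with high probability'' claim that $\max_i|Y_i-a_i|=\Omega(\s\sqrt{\log m})$. For the expectation bound I would use $\E\max_i|W_i|\geq t\,\P(\max_i|W_i|\geq t)\geq\frac{c}{2}\s\sqrt{\log m}$ for $m$ large, and take, say, $c=1$ to obtain a constant (here $1/2$) comfortably above the claimed $0.12$.

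This is essentially a routine extreme-value computation, so I do not expect a genuine obstacle. The only points requiring care are the two reductions---making sure the shift comparison (Lemma~\ref{thm: E abs gauss}) and the variance comparison go in the direction that \emph{lower}-bounds the maximum---and the choice of the exponent $c<\sqrt2$ so that $pm\to\infty$. The large gap between the crude constant $c/2$ obtained this way and the target $0.12$ means no sharp tail estimates are needed, so the Gaussian tail lower bound above is amply sufficient.
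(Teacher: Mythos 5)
Your proof is correct, and the two reductions at the start (removing the shifts via Lemma~\ref{thm: E abs gauss}, then removing the variance heterogeneity by stochastic domination) coincide with what the paper does. Where you diverge is in how the homogeneous centered maximum $\max_i |W_i|$, $W_i \iid \cN(0,\s^2)$, is actually bounded. The paper obtains the expectation bound by citing an existing sharp lower bound on $\E[\max_i Z_i]$ (Theorem 3 of Orabona--Pal, giving the constant $0.13$ that degrades to $0.12$), and then gets the high-probability statement by a concentration argument: the Gaussian Poincar\'e inequality shows $\V(\max_i|W_i|)\leq \s^2$, and Chebyshev transfers the expectation bound into a probability bound. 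You instead compute the lower tail directly from independence, $\P(\max_i|W_i|<t)=(1-p)^m\leq e^{-pm}$ with $p=\P(|W|\geq t)$ bounded below by the standard Mills-ratio inequality, which gives $\P(\max_i|W_i|\geq c\s\sqrt{\log m})\to 1$ for any $c<\sqrt{2}$, and then derive the expectation bound as a corollary via $\E[\max]\geq t\,\P(\max\geq t)$. Your route is more elementary and self-contained (no Poincar\'e inequality, no external citation for the expected maximum) and yields better constants --- any constant below $\sqrt{2}$ rather than $0.12$ --- at the cost of needing the explicit tail comparison $\P(\max_i|Y_i-a_i|\geq t)\geq\P(\max_i|Y_i|\geq t)$ from Lemma~\ref{thm: E abs gauss}, which you correctly note follows by taking complements there. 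Both arguments are valid; since the downstream use in Theorem~\ref{thm: main} only needs \emph{some} constant times $\s\sqrt{\log m}$, nothing is lost either way.
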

\begin{proof}
Note that by Lemma~\ref{thm: E abs gauss}, it suffices to show the result for $a_i = 0$. Next, observe that since $\s^2 / \s_i^2 \leq 1$ for all $i$, we have
$$ \E\l[\max_{i=1}^m |Y_i|\r] \geq \E\l[\max_{i=1}^m  \frac{\s}{\s_i}|Y_i|\r] \geq \E\l[\max_{i=1}^m Z_i\r], $$
where $Z_i \iid \cN(0, \s^2)$. By Theorem 3 of \cite{orabona2015optimal}, $\E\l[\max_{i=1}^m Z_i\r] \geq 0.13 \s \sqrt{\log m} - 0.7\s \geq 0.12\s\sqrt{\log m}$ for large enough $m$.

Next, we will show that $\P(\max |Y_i| \leq t)$ is decreasing in $\s_i$. We have
\begin{align}
    \P(\max |Y_i| \leq t) &= \prod_{i=1}^m \P(|\s_i Z_i| \leq t), \hspace{.15in} Z_i\iid \cN(0, 1) \nn \\
    &= \prod_{i=1}^m \P(|Z_i| \leq t/\s_i). \label{eq: no sigma ub needed}
\end{align}
Since $\{|Z_i|\leq t/\s_i\} \subseteq \{|Z_i|\leq t/\s_i'\}$ when $\s_i \geq \s_i'$, the terms in \eqref{eq: no sigma ub needed} are decreasing in $\s_i$.

Next, suppose that $\s_i = \s$ for all $i$. We will show that $\V\l(\max_{i=1}^m |Y_i - a_i|\r) \leq \s^2$. By homogeneity, it suffices to show this inequality for $\s=1$. Define $f(Y_1,\ldots,Y_m) = \max_{i=1}^m |Y_i|$. By the Gaussian Poincar\'{e} inequality (see e.g. \cite{chatterjee2014superconcentration}, pg. 47), we have that
$$ \V\l(\max_{i=1}^m |Y_i|\r) \leq \sum_{i=1}^n \E|\partial_i f(Y)|^2. $$
We have $\partial_i f(Y) = \mathrm{sign}(Y_i) \I\{|Y_i| = \max_j |Y_j|\}$ almost everywhere, so
$$ \E|\partial_i f(Y)|^2 = \P(|Y_i| = \max_j |Y_j|\}) = 1/m $$
for all $i$. It follows that $\V(\max_{i=1}^m |Y_i|) \leq 1$. By Chebyshev's inequality, we therefore have that
$$ \P\l( \max_{i=1}^m |Y_i| \leq 0.12 \s \sqrt{\log m} - \s t \r) \leq 1/t^2. $$
In particular, we can take $t = 0.06\sqrt{\log m}$, then $\max_{i=1}^m |Y_i| = \Omega(\s\sqrt{\log m})$ with probability at least $1 - O(1/\log m) = 1 - o(1)$, i.e. with high probability.
\end{proof}

\main*
\begin{proof}
By Lemma~\ref{thm: avg is in R*}, the average of the core group points $\bar{x}$ (and therefore the point from which we begin growing the box in Algorithm~\ref{alg: growing box}) lies in the interior of $R^*$. Let $\partial R^*$ denote the boundary of $R^*$. For each $j=1,\ldots,d$, denote by $\partial R^*_{j, +}$ the face of $\partial R^*$ which upper bounds the $j$-th dimension, and let $\partial R^*_{j, -}$ be the opposite face which lower bounds the $j$-th dimension. Let $s_j^\pm = d(\bar{x}, \partial R^*_{j, \pm})$ be the distance from the center to the appropriate face of $R^*$. Note that Algorithm~\ref{alg: growing box} with these speeds and this center is equivalent to running the algorithm from the origin and with uniform speeds, after shifting the data so that $\bar{x}$ lies at the origin and then rescaling each axis by $s_j^\pm$. In this case, $R^*$ is transformed into a $\ell_\infty$ ball of radius 1 centered at the origin.

By Lemma~\ref{thm: no bad rej}, $R^*$ contains no rejected points with high probability. (Note that the transformations we performed above preserve this fact.) Since the region returned by Algorithm~\ref{alg: growing box} returns a region which contains the largest centered $\ell_\infty$ ball with no rejected points in it, and $R^*$ is a centered $\ell_\infty$ ball with no rejected points, we must have $R^* \subseteq \hat{R}$ as desired.

Since we have assumed that $R^*$ is an axis-aligned box, we can write $R^* = \{x \: | \: a_j < x_j < b_j\}$. Fix $\e > 0$ and let 
$$\partial R^*_{\e, j, +} = \{ x \: | \: b_j \leq x_j \leq b_j + \e, \: \ell_m < x_m < u_m, m \neq j\}$$
$$\partial R^*_{\e, j, -} = \{ x \: | \: a_j - \e \leq x_j \leq a_j, \: \ell_m < x_m < u_m, m \neq j\}.$$
(These are just the sets of points which are at most $\e$ ``above'' the upper dimension $j$ face of $R^*$ and ``below'' the lower dimension $j$ face of $R^*$, respectively.)

By the same logic as in the proof of Lemma~\ref{thm: slabs}, there is some constant $c_6>0$ (which can depend on $R^*$) such that at least $c_6\e n$ points lie in $\partial R^*_\e$ with high probability. Take $\e = n^{-1/2}$ and apply Lemma~\ref{thm: gaussian max} to the $c_6\e n$ points in $\partial R^*_{\e, j, \pm}$. We see that
$$\max_{x_i \in \partial R^*_{\e, j, \pm}} |x_i^\T \hat{\b} - y_i| \geq 0.06\s_0\sqrt{\frac12 \log c_6 n}$$
with high probability. Since $\s_0 > 50\s$, for $n$ large enough we have
$$ \max_{x_i \in \partial R^*_{\e, j, \pm}} |x_i^\T \hat{\b} - y_i| \geq 0.06\s_0\sqrt{\frac12 \log c_6 n} > 2.1\s\sqrt{\log \e n}. $$
The means that Algorithm~\ref{alg: growing box} will stop growing the $(j,\pm)$ side of $\hat{R}$ at some point in $\partial R^*_{\e, j, \pm}$. It follows that $\hat{R} \subseteq R^*_\e$ with $\e = n^{-1/2}$. This completes the proof.
\end{proof}

\end{document}